\newtheorem{theor}{Theorem}
\newtheorem{prop}[theor]{Proposition}
\newtheorem{corol}[theor]{Corollary}
\newdimen\GridSize
\tikzset{
    GridSize/.code={\GridSize=#1},
    GridSize=3pt
}
\DeclareMathOperator*{\argmin}{arg\,min}
\title{Learnability of high-dimensional targets by two-parameter models and gradient flow}
\author{%
  Dmitry Yarotsky \\
  Skoltech\\
  \texttt{d.yarotsky@skoltech.ru} \\
}
\begin{document}

\maketitle

\begin{abstract}
We explore the theoretical possibility of learning $d$-dimensional targets with $W$-parameter models by gradient flow (GF) when $W<d$. Our main result shows that if the targets are described by a particular $d$-dimensional probability distribution, then there exist models with as few as two parameters that can learn the targets with arbitrarily high success probability. On the other hand, we show that for $W<d$ there is necessarily a large subset of GF-non-learnable targets. In particular, the set of learnable targets is not dense in $\mathbb R^d$, and any subset of $\mathbb R^d$ homeomorphic to the $W$-dimensional sphere contains non-learnable targets. Finally, we observe that the model in our main theorem on almost guaranteed two-parameter learning is constructed using a hierarchical procedure and as a result is not expressible by a single elementary function. We show that this limitation is essential in the sense that most models written in terms of elementary functions cannot achieve the learnability demonstrated in this theorem. 
\end{abstract}

\section{Introduction}

Starting from the works of Cantor \citep{cantor1878beitrag}, it is well-known that all finite-dimensional (or even countably-dimensional) real spaces are equinumerable and so, in principle, a set of several real numbers is as descriptive as a single number, or in other words  multi-dimensional vectors can be represented by scalars. The idea of reduction of higher-dimensional descriptions to lower-dimensional ones has since appeared in many  mathematical works. A couple of notable examples are continuous space-filling curves that fill the whole $\mathbb R^d$  \citep{peano1890SurUC, hilbert1891UeberDS} and the Kolmogorov-Arnold Superposition Theorem (KST, \citet{kolmogorov1957representation}) that states that any multivariate continuous function can be exactly represented in terms of compositions and sums of finitely many univariate continuous functions. 

In the context of machine learning, these results suggest that models with a small number of parameters can potentially be used to represent or approximate high-dimensional objects. In particular, \citet{maiorov1999lower} give an example of neural network that has a fixed number of weights but can approximate any continuous function:
\begin{theor}[\citealt{maiorov1999lower}]\label{th:maiorov}
There exists an activation function $\sigma$ which is real analytic, strictly increasing, sigmoidal (i.e., $\lim_{x\to -\infty}\sigma(x)=0$ and $\lim_{x\to +\infty}\sigma(x)=1$), and such that any $f\in C([0,1]^n)$ can be uniformly approximated with any accuracy by expressions $\sum_{i=1}^{6n+3}d_i\sigma(\sum_{j=1}^{3n}c_{ij}\sigma(\sum_{k=1}^n w_{ijk}x_k+\theta_{ij})+\gamma_i)$ with some parameters $d_i,c_{ij},w_{ijk},\theta_{ij},\gamma_i$. 
\end{theor}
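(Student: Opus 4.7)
The plan is to combine the Kolmogorov-Arnold superposition theorem (KST) with a careful construction of a single real-analytic sigmoid $\sigma$ that is "universal" for univariate approximation in the following sense: every $g\in C([A,B])$ and every $\epsilon>0$ admit parameters with
$$\Bigl|g(t)-\sum_{j=1}^3 c_j\sigma(w_jt+\theta_j)\Bigr|<\epsilon,\qquad t\in[A,B].$$
Granting such a $\sigma$, the neuron count is immediate from the Sprecher/Lorentz refinement of KST: every $f\in C([0,1]^n)$ has a representation $f(x)=\sum_{i=1}^{2n+1}g_i(y_i(x))$ with $y_i(x)=\sum_{k=1}^n\psi_{ik}(x_k)$, where the $\psi_{ik}$ are fixed universal continuous functions and only the $g_i$ depend on $f$. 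Approximating each of the $n$ univariate inner pieces $\psi_{ik}$ by $3$ $\sigma$-neurons gives $3n$ inner units per outer group $i$, embedded into the general affine form $\sum_k w_{ijk}x_k+\theta_{ij}$ by zeroing unused coordinates. Approximating each outer $g_i$ by $3$ further $\sigma$-neurons and controlling the resulting error by the uniform continuity of each $g_i$ yields $3(2n+1)=6n+3$ outer units, exactly as in the theorem.

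The heart of the argument is the construction of $\sigma$. Fix a countable family $\{p_m\}_{m\ge 1}$ of polynomials with rational coefficients that is dense in $C([0,1])$, pick disjoint intervals $I_m=[3m,3m+1]$, and set $\sigma=\sigma_0+\sum_{m\ge1}\epsilon_m h_m$, where $\sigma_0$ is a fixed real-analytic strictly increasing sigmoid (e.g.\ a smoothed ramp), each $h_m$ is an entire function that is close to a prescribed $p_m$-shaped bump on $I_m$ and extremely small off $I_m$ (produced via the Arakelian/Carleman theorem on entire approximation of continuous targets, or explicitly as the antiderivative of a sharply peaked analytic density), and the $\epsilon_m$ decay so fast that the series converges absolutely in a complex neighborhood of $\mathbb R$, strict monotonicity is preserved, and the sigmoidal limits $\sigma(-\infty)=0$, $\sigma(+\infty)=1$ are unaffected. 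With such tuning, on each $I_m$ one has $\sigma(x)=L_m(x)+\delta_m p_m(x-3m)+r_m(x)$ with $L_m$ affine, $\delta_m>0$ prescribed, and $\|r_m\|_\infty$ arbitrarily small.

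The $3$-neuron universality then follows by a three-parameter probe of $\sigma$: given $g\in C([A,B])$, pick $p_m$ approximating $g\circ\phi^{-1}$ on $[0,1]$ within $\epsilon/2$, where $\phi(t)=(t-A)/(B-A)$, and choose the first neuron so that $w_1 t+\theta_1=\phi(t)+3m$; this neuron then equals $L_m(\phi(t)+3m)+\delta_m p_m(\phi(t))+O(\|r_m\|_\infty)$ on $[A,B]$. The two auxiliary neurons $\sigma(w_2 t+\theta_2)$ and $\sigma(w_3 t+\theta_3)$ with very large $|w_2|,|w_3|$ each saturate to a constant near $0$ or $1$ on $[A,B]$, so together with their outer scalars they produce any desired affine function of $t$, in particular $-L_m(\phi(t)+3m)$; setting $c_1=\delta_m^{-1}$ and solving a $2\times 2$ system for $c_2,c_3$ recovers $p_m\circ\phi$ and hence $g$ within $\epsilon$. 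Inserting this $3$-neuron block into each univariate slot of KST and tracking error propagation via the moduli of continuity of the finitely many $g_i$ completes the approximation. The main obstacle is the analytic construction: real analyticity is rigid, so the $h_m$ cannot have compact support, and one must show that the off-$I_m$ tails of all $h_{m'}$ neither destroy the near-prescribed shape of $\sigma$ on any $I_m$ nor damage global monotonicity and sigmoidality; this is precisely what the Arakelian/Carleman-type approximation plus a sufficiently fast-decaying $(\epsilon_m)$ is designed to provide.
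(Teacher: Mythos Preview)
The paper does not prove this theorem; it is quoted from Maiorov--Pinkus purely as background, so there is no in-paper proof to compare against. Your overall plan---the Kolmogorov--Arnold representation for the $2n{+}1$ outer summands, together with a handcrafted analytic sigmoid $\sigma$ admitting three-term univariate approximation---is precisely the Maiorov--Pinkus strategy, and the counts $3n$ (inner) and $3(2n{+}1)=6n{+}3$ (outer) fall out correctly from it.

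There is, however, a genuine gap in your three-neuron step. You claim that the two auxiliary neurons $\sigma(w_2 t+\theta_2)$ and $\sigma(w_3 t+\theta_3)$, taken with very large $|w_2|,|w_3|$ so that each ``saturates to a constant near $0$ or $1$ on $[A,B]$'', will ``together with their outer scalars produce any desired affine function of $t$.'' But a linear combination of two constants is again a constant, not a general affine function. The term you must cancel is $c_1 L_m(\phi(t)+3m)$ with $c_1=\delta_m^{-1}$, and this has nonzero slope because $\sigma$ is strictly increasing (so $L_m'\ne 0$); moreover your own monotonicity constraint on $I_m$ forces $\delta_m\lesssim \sigma_0'(3m)/\|p_m'\|_\infty$, so after division by $\delta_m$ the slope of this affine piece is at least of order $\|p_m'\|_\infty$ and cannot be absorbed into the remainder. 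The standard repair is to place the two auxiliary neurons not in the saturated regime but with unit inner slope and shifts landing in the \emph{gaps} between the encoding intervals $I_m$, where $\sigma\approx\sigma_0$ is itself approximately affine on a unit window; two such nearly-affine pieces at different base points span the affine functions and let you subtract $L_m$. With that correction the rest of your outline (the Arakelian/Carleman-type construction of $\sigma$ and the KST error propagation) is sound.
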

Refinements of this result are given by \citet{guliyev2016single, guliyev2018approximation1, guliyev2018approximation2}. While Theorem \ref{th:maiorov} contains a non-explicit function $\sigma$, \citet{boshernitzan1986universal, laczkovich2000elementary, yarotsky2021elementary} give examples of fully explicit fixed-size analytic expressions that also can approximate arbitrary continuous functions. KST has inspired many other results on expressiveness of machine learning models, see e.g. \citet{montanelli2020error, schmidt2020kolmogorov, kuurkova1991kolmogorov, kuurkova1992kolmogorov, koppen2002training, igelnik2003kolmogorov}. The idea of space-filling curves is used in recent works on generating higher-dimensional distributions from low-dimensional ones \citep{bailey2018size, perekrestenko2020constructive, perekrestenko2021high}. 

While the model appearing in Theorem  \ref{th:maiorov} looks like a standard neural network (apart from the special activation), the proof of its universal approximation property has nothing to do with the method of gradient descent (GD) invariably used nowadays to train neural networks. The proofs of the universal approximation property in this and similar theorems (including the classical universal approximation theorems of \citet{cybenko1989approximation, leshno1993multilayer} that consider neural networks with a growing number of neurons) normally consist in presenting, or demonstrating existence of, parameters making the model output arbitrarily close to the target $f$. There is no guarantee whatsoever that these parameters can actually be learned by GD. Moreover, learning by GD is especially problematic for models with a small number of parameters.         

In this regard, note that modern deep neural networks are typically abundantly parameterized, with the largest models  containing hundreds of billions of weights \citep{brown2020language, smith2022using}. One obvious reason for that is the necessity to store a substantial amount of information. 
But another, more subtle property of large models is that they are easier to train by GD-based optimization \citep{choromanska2015loss}, which can be  explained by the optimizer having more freedom in finding good descent directions, in particular evading spurious local minima and saddle points.

A convincing and rigorous demonstration that overparameterization may be beneficial for training is provided by the infinite width limits of neural networks in regimes such as NTK \citep{jacot2018neural} and Mean-Field \citep{mei2018mean, rotskoff2018trainability, chizat2018global}. While the number of weights in these limits is effectively infinite, the resulting macroscopic loss surface is relatively simple (even convex after reparameterization, in the NTK case); the GD dynamics is analytically tractable and, under mild assumptions, provably trains the model to perfect fit.   

In contrast, if the number of parameters is small, then the loss surface tends to be rough and GD inefficient \citep{baity2018comparing}. Suboptimal local minima are known to be a general feature of finite neural networks with nonlinearities \citep{auer1995exponentially, yun2018small, swirszcz2016local, zhou2017critical, christof2023omnipresence}. As the number of parameters is decreased, the chances for GD to get trapped in a bad local minimum  increase \citep{safran2018spurious}.

The above discussion raises a natural abstract question that we address in the present paper:
\begin{center}
\emph{Can models with a small number of parameters learn high-dimensional targets by gradient descent?}
\end{center}
In other words, we ask if the possibility of a reduction of a high-dimensional target space to a low-dimensional parametric description reflected in Theorem \ref{th:maiorov}  can at least theoretically be combined with learning by GD, or if this is prevented by fundamental obstacles.  

We are not aware of existing rigorous results addressing this question. As remarked, existing results on approximation by highly expressive models do not discuss learning by GD, while publications on GD usually consider standard models such as conventional neural networks. However, we want to address the above question in the most abstract way without assuming any particular model structure. It is clear that models having a small number of parameters and yet GD-learnable, if at all possible, require a very special design.

\textbf{Our contribution} in this paper is a resolution of the above question.
\begin{enumerate}
    \item Our main result is the proof that if the learned targets are represented as $d$-dimensional vectors and are described by a probability distribution in $\mathbb R^d$, then there exist models with just $W=2$ parameters that can learn these targets by Gradient Flow (GF) with success probability arbitrarily close to~1 (Theorem~\ref{th:learn}). 
    \item We show that Theorem~\ref{th:learn} is actually close to being optimal, since underparameterization with $W<d$ generally implies severe constraints on the set of GF-learnable targets:
    \begin{enumerate}    
    \item Under a mild nondegeneracy assumption, the GF-learnable targets are not dense in $\mathbb R^d$ (Theorem \ref{th:not_dense}). In particular, the success probability cannot generally be made exactly equal to 1 in Theorem~\ref{th:learn}.  
    \item In contrast, the non-learnable targets are dense in $\mathbb R^d$. Moreover, any subset of $\mathbb R^d$ homeomorphic to the $W$-sphere contains non-learnable targets (Theorem~\ref{th:borsuk_ulam}). 
    \item The number of parameters in Theorem~\ref{th:learn} cannot be decreased to one.
    \end{enumerate}
        \item In the proof of Theorem~\ref{th:learn}, the model is constructed using an infinite hierarchical procedure making it not expressible by a single elementary function. We conjecture that the result established in Theorem~\ref{th:learn} cannot be achieved with models implementable by elementary functions. For such functions not involving $\sin$ or $\cos$ with unbounded arguments, we prove this as a consequence of the closure of the model image having zero Lebesgue measure in the target space. 
\end{enumerate}
We describe the details of our setting in Section \ref{sec:setting}. In Section \ref{sec:impossibility} we give several general results showing that the underparameterized ($W<d$) learning is theoretically challenging. Then, in Section \ref{sec:twoparam} we present our main result on the almost guaranteed learnability with two parameters. After that, in Section \ref{sec:elem} we consider models expressible by elementary functions. Finally, in Section \ref{sec:discuss} we summarize our findings and discuss several questions that are left open by our research.    

Some (more complex or less important) proofs are given in the appendix; in these cases the respective sections are indicated in the theorem statements.

\section{The setting}\label{sec:setting}
In supervised learning one is usually interested in learning \emph{target functions} (or simply \emph{targets}) $f:X\to Y$, with some input and output spaces $X$ and $Y$. We will consider the setting in which the space of targets is a linear space with a euclidean structure. To this end, suppose that $Y$ is a euclidean space with a scalar product $\langle \cdot,\cdot\rangle$, and $X$ is endowed with a measure $\nu$ reflecting the distribution of inputs $\mathbf x\in X$ of the function $f$. One can then form the Hilbert space $L^2(X,Y,\nu)$ of functions $f:X\to Y$ equipped with the standard scalar product $\langle f,g\rangle=\int_X \langle f(\mathbf x), g(\mathbf x)\rangle \nu (d\mathbf x)$. We will assume that the \emph{target space} $\mathcal H$ is a (finite- or infinite-dimensional) subspace of this Hilbert space $L^2(X,Y,\nu)$. 

\textbf{Examples:}
\begin{enumerate}
\item The full space $\mathcal H=L^2(X,Y,\nu)$ represents all maps $f:X\to Y$ distinguishable on sets of positive measure $\nu$. If $Y=\mathbb R^m$ with the standard scalar product and $\nu=\tfrac{1}{N}\sum_{n=1}^N\delta_{\mathbf x_n}$ is an empirical distribution corresponding to a finite subset of $X$, then $\mathcal H$ is finite-dimensional, $\mathcal H\cong\mathbb R^{d}$ with $d=Nm$. On the other hand, if $\nu$ is not finitely supported, then $\dim \mathcal H=\infty$.
\item Let $X=\mathbb R^n, Y=\mathbb R$, and the targets $f:X\to Y$ be linear functions. If $\nu$ is nondegenerate in the sense that the covariance matrix $[\int x_i x_j \nu(d\mathbf x)]_{i,j=1}^n$ is full-rank, then the respective target space $\mathcal H$ is $n$-dimensional (otherwise, $\dim\mathcal H<n$).
\item Let $X=\mathbb R^n, Y=\mathbb R$, and the targets be polynomials of degree not larger than $q$. Then $\dim \mathcal H\le{n+q\choose q}$, with equality attained for suitably nondegenerate measures $\nu$.
\end{enumerate} 
We will often write the function $f$ considered as an element of $\mathcal H$ as $\mathbf f$. Throughout the paper, we refer to the dimension of the target space $\mathcal H$ as \emph{target dimension} and denote it by $d$. Note that the target dimension $d$ is not to be confused with  the dimensions of $X$ and $Y$ (in fact,  $X$ does not even have to be a linear space and have a dimension). Rather, the target dimension $d$ is the number of scalar parameters required to specify a particular target $\mathbf f$ within the space $\mathcal H$ of considered targets.

Suppose that we are learning the target $f\in \mathcal H$ using a parametric model with $W$ parameters. We will view this model as a map $\Phi:\mathbb R^W\to\mathcal H$ between the parameter space $\mathbb R^W$ and the target space $\mathcal H$.

We will consider Gradient Flow (GF), i.e. the continuous version of gradient descent. Learning by GF prescribes that the parameter vector $\mathbf w$ be evolved by
\begin{equation}\label{eq:gd}
    \frac{d\mathbf w(t)}{dt}=-\nabla_{\mathbf w}L_{\mathbf f}(\mathbf w(t)),
\end{equation}
where we use the standard square loss, $L_{\mathbf f}(\mathbf w)=\frac{1}{2}\mathbb E_{\mathbf x\sim \nu}\|f(\mathbf x)-\Phi(\mathbf w)(\mathbf x)\|_{Y}^2,$ which can equivalently be written as
\begin{equation}\label{eq:loss}
    L_{\mathbf{f}}(\mathbf w)=\frac{1}{2}\|\mathbf f-\Phi(\mathbf w)\|_{\mathcal H}^2.
\end{equation}
Here, the subscripts $Y,\mathcal H$ on the norms indicate the respective spaces. We will assume for definiteness that GF starts
from $\mathbf w(t=0)=\mathbf 0.$ 

We will always assume that $W$ is finite and $\Phi$ is differentiable with a Lipschitz-continuous differential. In this case  Eq. \eqref{eq:gd} is, by  Picard-Lindel\"of theorem, uniquely solvable locally (i.e., for a bounded interval of times). 
It is possible to relax the Lipschitz differentiability assumption using the special structure of the gradient flow equation   (see e.g. the expository paper   \cite{santambrogio2017euclidean}), but we will not need this in this work.

In fact, GF \eqref{eq:gd} is solvable not only locally, but also globally, i.e. the solution $\mathbf w(t)$ exists for any $t>0$. Indeed, the only obstacle for the global existence is the divergence of the solution in finite time, but it is ruled out by the inequality
\begin{align}
&\|\mathbf w(t)\|^2\le \Big(\int_{0}^t\|\tfrac{d\mathbf w(\tau)}{d\tau}\|d\tau\Big)^2\le t\int_{0}^t\|\tfrac{d\mathbf w(\tau)}{d\tau}\|^2d\tau\\
&=-t\int_{0}^t\tfrac{dL_{\mathbf f}(\mathbf w(\tau))}{d\tau} d\tau=(L_{\mathbf f}(\mathbf 0)-L_{\mathbf f}(\mathbf w(t)))t\le L_{\mathbf f}(\mathbf 0)t.\nonumber
\end{align}

Let $F_\Phi$ denote the set of targets $\mathbf f$ for which the respective GF converges to $\mathbf f$:
\begin{equation}
F_\Phi = \{\mathbf f\in \mathcal H: \inf_t L_{\mathbf f}(\mathbf w(t))=0\text{ for }\mathbf w(t)\text{ given by \eqref{eq:gd}}\}.
\end{equation}
Our goal in the remainder of this work will be to examine if we can ensure, by a suitable design of the map $\Phi:\mathbb R^W\to\mathbb R^d$ with $W< d$, that the set $F_\Phi$ is sufficiently large.\footnote{Note that if $W\ge d$, then we can easily ensure $F_\Phi=\mathcal H$ by taking any surjective linear operator $\Phi:\mathbb R^W\to \mathbb R^d$ as the model. Indeed, in this case for any target $\mathbf f\in \mathbb R^d$ the loss function $L_{\mathbf f}(\mathbf w)=\tfrac{1}{2}\|\mathbf f-\Phi \mathbf w\|^2$ is quadratic with the global minimum $L_{\mathbf f}(\mathbf w_*)=0$,  the GF is solved as $\mathbf w(t)=\Phi^T(\Phi\Phi^T)^{-1}(1-e^{-\Phi\Phi^T t})\mathbf f$, and $\Phi\mathbf w(t)\stackrel{t\to\infty}{\longrightarrow}\mathbf f.$
} 
We will refer to  targets $\mathbf f\in F_\Phi$ as \emph{GF-learnable} or simply \emph{learnable}. 
We remark that, assuming a standard norm topology in $\mathcal H$, the set $F_\Phi$ is Borel-measurable as the countable intersection of the open sets $\{\mathbf f\in \mathcal H: \inf_t L_{\mathbf f}(\mathbf w(t))<1/n,\text{ where }\mathbf w(t)\text{ is given by \eqref{eq:gd}}\}, n=1,2,\ldots$
  
\textbf{Example:} To clarify our setting, suppose that we fit to data a linear function $f(\mathbf x)=\mathbf k^T\mathbf x$ with $\mathbf k,\mathbf x\in\mathbb R^d$ for some $d$ (see example 2 earlier in the section). Normally, this function is learned by applying GF to the $d$-dimensional parameter vector $\mathbf k$. In our setting, however, we rather rewrite this model in the form $y=\Phi(\mathbf w)^T\mathbf x$, with some generally nonlinear $\Phi:\mathbb R^W\to\mathbb R^d$, and ask if we can learn the model by using GF w.r.t. $\mathbf w$ with $W< d$. In this sense, we {decouple} the linear weight-dependence from the linear input-dependence and replace it by a nonlinear one.  

Note that formulation \eqref{eq:gd}-\eqref{eq:loss} of GF is stated purely in terms of vectors and maps in Hilbert spaces, without any reference to the underlying sets $X,Y$ and the measure $\nu$. It is this abstract Hilbert space formulation that we will deal with in the remainder of the paper. 

Some of our results, including the main ``positive'' Theorem \ref{th:learn}, require the target Hilbert space to be finite-dimensional, i.e. $\mathcal H\cong\mathbb R^{d}$ with $d<\infty$. Others (namely the ``negative'' Theorems \ref{prop:dim1}, \ref{th:not_dense}, \ref{th:borsuk_ulam}) remain valid for infinite-dimensional Hilbert spaces $\mathcal H$.  By a slight abuse of notation, in this latter case we also write $\mathcal H\cong \mathbb R^d,$ but with $d=\infty$.

\section{General impossibility results}\label{sec:impossibility}

We start with several general results showing fundamental limitations of GF with a small dimension $W$. First, it is easy to see that one-parameter models can ensure GF convergence only for a very small set of targets.
\begin{prop}\label{prop:dim1}
Let $W=1$. Then,  if a target $\mathbf f\in F_\Phi$, then either $\mathbf f=\Phi(w)$ for some $w$, or $\mathbf f\in \{\mathbf f_-, \mathbf f_+\}=\{\lim_{w\to\pm \infty}\Phi(w)\}$ (if any of these two limits exist). In particular, if $\mathcal H=\mathbb R^d$ with $2\le d<\infty$, then $F_\Phi$  has Lebesgue measure 0 in $\mathcal H$.
\end{prop}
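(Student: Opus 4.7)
My plan is to use the fact that for $W=1$ the gradient flow \eqref{eq:gd} is a scalar autonomous ODE, whose solutions are monotone in $t$; everything then follows by elementary continuity arguments. The main nontrivial step is the monotonicity, and this is exactly what will fail for $W\ge 2$, consistently with the paper's positive two-parameter result.

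First, to prove monotonicity, I would write $g(w)=-L_{\mathbf f}'(w)$ so that \eqref{eq:gd} reads $\dot w=g(w)$ with $g$ locally Lipschitz (inherited from the Lipschitzness of the differential of $\Phi$). Uniqueness of solutions (Picard--Lindel\"of) implies that two trajectories of the ODE $\dot w=g(w)$ either coincide or never intersect, so in particular $w(t)$ cannot cross any equilibrium of $g$. Either $g(0)=0$, in which case $w\equiv 0$, or $w(t)$ stays in the connected component of $\{g\ne 0\}$ containing $0$, on which $g$ has constant sign; in either case $w(\cdot)$ is monotone.

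Next, monotonicity yields a limit $w_*\in\mathbb R\cup\{\pm\infty\}$ as $t\to\infty$. Since $L_{\mathbf f}(w(t))$ is non-increasing along the flow, the assumption $\mathbf f\in F_\Phi$ forces $L_{\mathbf f}(w(t))\to 0$, i.e.\ $\Phi(w(t))\to\mathbf f$ in $\mathcal H$. If $w_*\in\mathbb R$, continuity of $\Phi$ gives $\mathbf f=\Phi(w_*)$. If $w_*=+\infty$, the set $\{w(t):t\ge 0\}$ covers an interval of the form $[w(0),+\infty)$ by the intermediate value theorem; then for any sequence $w_n\to+\infty$ one can, for large $n$, find $t_n$ with $w(t_n)=w_n$, so $\Phi(w_n)\to\mathbf f$, proving $\lim_{w\to+\infty}\Phi(w)=\mathbf f=\mathbf f_+$. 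The case $w_*=-\infty$ is symmetric.

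Finally, for the Lebesgue-measure claim when $2\le d<\infty$, the two paragraphs above give $F_\Phi\subseteq\Phi(\mathbb R)\cup\{\mathbf f_-,\mathbf f_+\}$. Because $\Phi$ has Lipschitz differential, each restriction $\Phi|_{[-n,n]}$ is Lipschitz, so $\Phi([-n,n])$ has finite one-dimensional Hausdorff measure and, for $d\ge 2$, Lebesgue measure zero in $\mathbb R^d$. Hence $\Phi(\mathbb R)=\bigcup_n\Phi([-n,n])$ is Lebesgue-null, and adjoining at most two extra points preserves this.
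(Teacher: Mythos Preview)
Your proof is correct and follows the same skeleton as the paper's: scalar monotonicity of the trajectory $w(t)$, hence existence of a limit in $\mathbb R\cup\{\pm\infty\}$, and then image-measure-zero for the Lebesgue claim. The only substantive difference is in the last step: the paper invokes Sard's theorem (since for $\Phi:\mathbb R\to\mathbb R^d$ with $d\ge 2$ every point is critical, the whole image lies in the critical-value set), whereas you use the more elementary Lipschitz/Hausdorff-dimension argument; your route avoids the differentiability hypothesis in Sard and works already under the paper's standing Lipschitz assumption, while the paper's one-line appeal to Sard is shorter but slightly heavier machinery.
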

\begin{proof}
The first statement follows since the optimization trajectory $ w(t)$ is a scalar monotone function of $t$. The second statement on Lebesgue measure  follows by Sard's theorem.  
\end{proof}

The next result shows that if $W<d$ and $\Phi$ is sufficiently regular and non-degenerate at $\mathbf w=\mathbf 0$, then $F_\Phi$ cannot be dense in $\mathcal H$. 

\begin{theor}\label{th:not_dense}
Let $1\le W<d\le \infty$ and $\Phi:\mathbb R^W\to\mathcal H$ be a $C^2$ map such that at $\mathbf w=\mathbf 0$ the Jacobi matrix $J_0=\tfrac{\partial \Phi}{\partial\mathbf w}(\mathbf 0)$ has full rank $W$. Then $F_\Phi$ is not dense in $\mathcal H$.
\end{theor}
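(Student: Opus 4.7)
The plan is to exhibit an open ball in $\mathcal H$ disjoint from $F_\Phi$. Because $W<d$, the orthogonal complement $V^\perp$ of $V:=\mathrm{range}(J_0)$ in $\mathcal H$ is nontrivial, so we can pick a nonzero $\mathbf v\in V^\perp$ and set $\mathbf f_0:=\Phi(\mathbf 0)+\mathbf v$. The point of this choice is that
\[
\nabla L_{\mathbf f_0}(\mathbf 0)=-J_0^T(\mathbf f_0-\Phi(\mathbf 0))=-J_0^T\mathbf v=0,
\]
so $\mathbf 0$ is a critical point of $L_{\mathbf f_0}$; the unique GF solution is the constant $\mathbf w(t)\equiv\mathbf 0$, with loss permanently stuck at $\tfrac12\|\mathbf v\|^2>0$. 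Hence $\mathbf f_0\notin F_\Phi$. The remaining task is to upgrade this single non-learnable target into an open set of non-learnable targets around $\mathbf f_0$.

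First I would compute the Hessian of $L_{\mathbf f_0}$ at $\mathbf 0$, which works out to
\[
\nabla^2 L_{\mathbf f_0}(\mathbf 0)=J_0^T J_0-A(\mathbf v),\qquad A(\mathbf v)_{ij}:=\langle \mathbf v,\partial_i\partial_j\Phi(\mathbf 0)\rangle_{\mathcal H}.
\]
By the full-rank hypothesis, $J_0^T J_0$ is positive definite, and $\|A(\mathbf v)\|=O(\|\mathbf v\|)$ by the $C^2$ assumption. Choosing $\|\mathbf v\|$ small but nonzero therefore makes the Hessian positive definite, so $\mathbf 0$ is a \emph{strict} local minimum of $L_{\mathbf f_0}$.

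Next I would apply the implicit function theorem to $g(\mathbf w,\mathbf f):=\nabla_{\mathbf w}L_{\mathbf f}(\mathbf w)$. Its Jacobian in $\mathbf w$ at $(\mathbf 0,\mathbf f_0)$ is the positive-definite Hessian above, yielding a $C^1$ family of critical points $\mathbf w^*(\mathbf f)$ defined on a neighborhood of $\mathbf f_0$ with $\mathbf w^*(\mathbf f_0)=\mathbf 0$. By joint continuity of $\nabla^2 L_{\mathbf f}$ in $(\mathbf w,\mathbf f)$, one can pick a radius $r>0$, a constant $\mu>0$, and a neighborhood $U\ni\mathbf f_0$ so that $\mathbf w^*(\mathbf f)\in B_{r/4}(\mathbf 0)$ and $L_{\mathbf f}$ is $\mu$-strongly convex on $\overline{B_r(\mathbf 0)}$ for every $\mathbf f\in U$. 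The Lyapunov function $E(t):=\tfrac12\|\mathbf w(t)-\mathbf w^*(\mathbf f)\|^2$ then satisfies $\tfrac{d}{dt}E(t)\le -2\mu E(t)$ while the trajectory stays in $B_r$, and the bound $E(0)\le r^2/32$ combined with a standard barrier argument confines $\mathbf w(t)$ to $B_{r/2}$ for all $t\ge 0$. Thus GF from $\mathbf 0$ converges exponentially to $\mathbf w^*(\mathbf f)$ for every $\mathbf f\in U$, and the limiting loss $L_{\mathbf f}(\mathbf w^*(\mathbf f))$ is continuous in $\mathbf f$ and equals $\tfrac12\|\mathbf v\|^2>0$ at $\mathbf f_0$; shrinking $U$ keeps it positive, giving $U\cap F_\Phi=\emptyset$.

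The main technical step is the basin-of-attraction claim: one must ensure that the fixed starting point $\mathbf 0$ sits firmly inside the region where $L_{\mathbf f}$ is strongly convex around the perturbed minimizer $\mathbf w^*(\mathbf f)$, so that the trajectory cannot escape to a different part of the loss landscape. Everything else reduces to standard IFT and continuity arguments, all enabled by the positive-definiteness of the Hessian that the $C^2$ regularity and the full-rank assumption on $J_0$ provide.
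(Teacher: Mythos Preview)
Your proposal is correct and follows essentially the same strategy as the paper: pick $\mathbf f_0=\Phi(\mathbf 0)+\mathbf v$ with $\mathbf v$ small and orthogonal to $\mathrm{range}(J_0)$ so that $\mathbf 0$ is a strict local minimum of $L_{\mathbf f_0}$, then perturb to an open set of trapped targets. The paper implements the trapping via a direct loss-barrier estimate $L_{\mathbf f}(\mathbf w)>L_{\mathbf f}(\mathbf 0)$ on a small sphere, whereas you use the implicit function theorem plus strong convexity to track the perturbed minimizer; both routes are valid and essentially interchangeable.
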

\begin{proof} We show that there is a ball of targets for which the GF trajectory gets trapped at a local minimum due to a loss barrier. Without loss of generality, assume that $\Phi(\mathbf 0)=\mathbf 0$. Let $\mathbf f_0\in \mathbb R^d$ be some length-$l$ vector orthogonal to the range of the differential $J_0$; such an $\mathbf f_0$ exists because $d>W$. Let $B_{\mathbf f_0,\epsilon}=\{\mathbf f\in\mathbb R^d: \|\mathbf f-\mathbf f_0\|<\epsilon\}$. Since $\Phi$ is $C^2$, we have $\Phi(\mathbf w)=J_0\mathbf w+R(\mathbf w)$, with a remainder $\|R(\mathbf w)\|\le C\|\mathbf w\|^2$ for all sufficiently small $\|\mathbf w\|$ with some constant $C$. Then, for $\mathbf f\in B_{\mathbf f_0,\epsilon}$ we have
\begin{align}
    L_{\mathbf f}(\mathbf w)-L_{\mathbf f}(\mathbf 0)={}&\tfrac{1}{2}\|\mathbf f-\Phi(\mathbf w)\|^2-\tfrac{1}{2}\|\mathbf f\|^2\nonumber
    =\tfrac{1}{2}\|\Phi(\mathbf w)\|^2-\langle \mathbf f,\Phi(\mathbf w)\rangle\\
    \ge{}&\tfrac{1}{2}\|J_0\mathbf w\|^2-(Cl\|\mathbf w\|^2+\|J_0\|\epsilon \|\mathbf w\|)
    +O(\epsilon^2+\|\mathbf w\|^3).\nonumber
\end{align}
Since $\operatorname{rank} J_0=W$, the matrix $J_0^*J_0$ is strictly positive definite. Choose $l$ small enough so that $J_0^*J-Cl$ is still strictly positive definite. Then, if we subsequently choose $r$ and then $\epsilon$ small enough, we have $L_{\mathbf f}(\mathbf w)-L_{\mathbf f}(\mathbf 0)>0$ for all $\mathbf w$ such that $\|\mathbf w\|=r$, i.e. for $\mathbf f\in B_{\mathbf f_0,\epsilon}$ the GF trajectory $\mathbf w(t)$ never leaves the ball $U_r=\{\mathbf w\in \mathbb R^W:\|\mathbf w\|\le r\}.$ Then, since $\Phi$ is continuous, if $\epsilon< l$ and $r$ is small enough, $B_{\mathbf f_0,\epsilon}\cap \Phi(U_r)=\varnothing$ and so the ball $B_{\mathbf f_0,\epsilon}$ cannot be reached by GF.  
\end{proof}

Finally, we show that for $W<d$ any subset of $\mathbb R^d$ homeomorphic to the $W$-sphere contains non-learnable targets:
\begin{theor}\label{th:borsuk_ulam}
Let $1\le W,d\le\infty$. Suppose that a set $G\subset\mathbb R^d$ is the image of the $W$-dimensional sphere $\mathbb S^W=\{\mathbf y\in\mathbb R^{W+1}: \|\mathbf y\|=1\}$ under a continuous and injective map $g:\mathbb S^W\to \mathbb R^d$. Then $G\not\subset F_{\Phi}$. 
\end{theor}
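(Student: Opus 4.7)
My plan is to apply the Borsuk-Ulam theorem and leverage the injectivity of $g$. For each $T\ge 0$ and each target $\mathbf{f}\in\mathcal H$, let $\mathbf{w}_\mathbf{f}(\cdot)$ denote the solution of \eqref{eq:gd} with initial value $\mathbf{0}$, and define the continuous map $\Psi_T:\mathcal H\to\mathbb R^W$ by $\Psi_T(\mathbf{f})=\mathbf{w}_\mathbf{f}(T)$ (continuity in $\mathbf{f}$ follows from standard Picard--Lindelöf estimates and the global existence derived in the preceding discussion). Assuming for contradiction that $G\subset F_\Phi$, the Borsuk-Ulam theorem applied to the composition $\Psi_T\circ g:\mathbb S^W\to\mathbb R^W$ produces, for every $T$, a point $\mathbf{y}_T\in\mathbb S^W$ with $\mathbf{w}_{g(\mathbf{y}_T)}(T)=\mathbf{w}_{g(-\mathbf{y}_T)}(T)$, and hence with $\Phi(\mathbf{w}_{g(\mathbf{y}_T)}(T))=\Phi(\mathbf{w}_{g(-\mathbf{y}_T)}(T))$.

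The main obstacle, and the only nontrivial step, is to upgrade the pointwise convergence $\Phi(\mathbf{w}_\mathbf{f}(T))\to\mathbf{f}$ (which holds for every $\mathbf{f}\in F_\Phi$, since $L_\mathbf{f}(\mathbf{w}_\mathbf{f}(t))$ is monotonically non-increasing along GF) to \emph{uniform} convergence on the compact set $G=g(\mathbb S^W)$. For this I would fix $\epsilon>0$ and introduce the first-hit time $T_\epsilon(\mathbf{f}):=\inf\{t\ge 0 : L_\mathbf{f}(\mathbf{w}_\mathbf{f}(t))<\epsilon\}$. Because $L_\mathbf{f}(\mathbf{w}_\mathbf{f}(t))$ is jointly continuous in $(\mathbf{f},t)$, the sublevel set $\{\mathbf{f}:T_\epsilon(\mathbf{f})<t_0\}$ is open for every $t_0$, so $T_\epsilon$ is upper semicontinuous, and it is finite on $F_\Phi$ by the definition of learnability. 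Compactness of $G\subset F_\Phi$ therefore yields $T_\epsilon^\ast:=\max_{\mathbf{f}\in G}T_\epsilon(\mathbf{f})<\infty$, and the monotonicity of the loss then gives the uniform bound $\|\Phi(\mathbf{w}_\mathbf{f}(T))-\mathbf{f}\|<\sqrt{2\epsilon}$ for all $T>T_\epsilon^\ast$ and all $\mathbf{f}\in G$.

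To finish, I take a sequence $T_n\to\infty$ and, using compactness of $\mathbb S^W$ (for finite $W$), extract a subsequence along which $\mathbf{y}_{T_n}\to\mathbf{y}^\ast\in\mathbb S^W$. Continuity of $g$ gives $g(\pm\mathbf{y}_{T_n})\to g(\pm\mathbf{y}^\ast)$, while the uniform convergence established above yields $\Phi(\mathbf{w}_{g(\pm\mathbf{y}_{T_n})}(T_n))\to g(\pm\mathbf{y}^\ast)$. Since these two model outputs coincide at every $T_n$, I conclude $g(\mathbf{y}^\ast)=g(-\mathbf{y}^\ast)$, contradicting injectivity of $g$ because $\mathbf{y}^\ast\ne-\mathbf{y}^\ast$. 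The case $W=\infty$ lies outside the scope of classical Borsuk-Ulam and of the compactness argument for $\mathbb S^W$; I would handle it separately, either by invoking an infinite-dimensional Borsuk-Ulam variant or by a reduction to embedded finite-dimensional subspheres, but I expect the core idea above to transfer once the topological ingredient is in place.
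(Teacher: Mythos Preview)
Your proof is correct and follows essentially the same route as the paper: apply Borsuk--Ulam to the map $\mathbf y\mapsto \mathbf w_{g(\mathbf y)}(T)$, then combine the resulting antipodal coincidence with uniform convergence of the loss on the compact set $G$ (the paper invokes Dini's theorem implicitly, whereas you spell it out via upper semicontinuity of first-hit times). The one streamlining in the paper's version is that it avoids your subsequence extraction by directly using the uniform lower bound $l=\inf_{\mathbf y}\|g(\mathbf y)-g(-\mathbf y)\|>0$ to show $\sup_{\mathbf y}L_{g(\mathbf y)}(\mathbf w_{g(\mathbf y)}(t))\ge l^2/8$ for every $t$, which immediately contradicts uniform convergence; as for $W=\infty$, the paper's proof also tacitly requires finite $W$ (for both Borsuk--Ulam and compactness of $\mathbb S^W$), consistent with the standing assumption earlier in the paper that $W$ is finite.
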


\begin{figure}
\begin{minipage}[c]{0.35\textwidth}
\begin{center}
\begin{tikzpicture}[scale=2] 
\draw[domain=-pi:pi, samples=200, variable=\x, blue, thick]  plot ({0.5*sin(deg(\x))}, {0.5*cos(deg(\x))});
\draw[domain=-1.3:1.3, samples=200, variable=\x]  plot ({\x-2.3*\x*exp(-\x*\x)}, {exp(-2*\x*\x)-0.6});
\draw[domain=-0.85:0.85, samples=200, variable=\x, thick, red]  plot ({\x-2.3*\x*exp(-\x*\x)}, {exp(-2*\x*\x)-0.6});
\node[] at (-0.5,0.4) {$G$};
\node[] at (0.9,-0.4) {$\Phi(\mathbb R^W)$};
\node[circle,fill,inner sep=1pt, label={left:$g(\mathbf y_t)$}] at ({-0.25*sqrt(2)},{-0.25*sqrt(2)})  (A) {};
\node[circle,fill,inner sep=1pt, label={right:$g(-\mathbf y_t)$}] at ({0.25*sqrt(2)},{0.25*sqrt(2)})  (B) {};
\def \x {-0.5}
\node[circle,fill,inner sep=1pt, label={[xshift=-5mm, yshift=-2.3mm]$\Phi(\mathbf z_t)$}] at ({\x-2.3*\x*exp(-\x*\x)}, {exp(-2*\x*\x)-0.6})  (C) {};
\draw[dashed] (A) -- (C) -- (B);
\draw [green, ->] (0,0.4) -- (0, 0.5);
\draw [green, ->] (-0.33,-0.2) -- (-0.41, -0.26);
\draw [green, ->] (0.33,-0.2) -- (0.41, -0.26);
\end{tikzpicture}
\end{center}
\end{minipage}
\begin{minipage}[c]{0.63\textwidth}
\caption{Proof of Theorem \ref{th:borsuk_ulam}. GF cannot converge for all points of $G$: 
such a convergence would require $\Phi(\mathbf z_t)$ to be simultaneously close to both $g(\mathbf y_t)$ and $g(-\mathbf y_t)$, which are far from each other. }\label{fig:borsuk-ulam}
\end{minipage}
\end{figure}
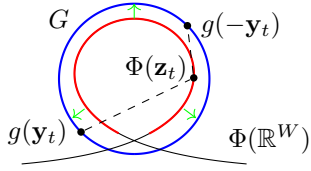

\begin{proof}[Proof (see Figure \ref{fig:borsuk-ulam}). ]
We use the Borsuk-Ulam antipodality theorem saying that for any continuous map $\phi:\mathbb S^W\to\mathbb R^W$ there exists a pair of antipodal points $\mathbf y,-\mathbf y\in \mathbb S^W$ such that $\phi(\mathbf y)=\phi(-\mathbf y)$.

Let $\mathbf w_{\mathbf f}(t)$ denote the solution of GF \eqref{eq:gd} with target $\mathbf f$. For any $t>0$, consider the map $\phi_t:\mathbb S^W\to \mathbb R^W$ given by $\phi_t(\mathbf y)=\mathbf w_{g(\mathbf y)}(t).$ By the assumption on $g$ and the continuous dependence of GF on the target, the map $\phi_t$ is continuous. By Borsuk-Ulam, it follows that there is $\mathbf y_t\in \mathbb S^W$ such that $\phi_t(\mathbf y_t)=\phi_t(-\mathbf y_t)$. Denote this common output vector by $\mathbf z_t$.

Let $l=\inf_{\mathbf y\in \mathbb S^W}\|g(\mathbf y)-g(-\mathbf y)\|.$ Observe that $l>0$, by the continuity and injectivity of $g$ as well as compactness of $\mathbb S^W$. Then for any $t$
\begin{align}
\sup_{\mathbf y\in \mathbb S^W} L_{g(\mathbf y)} & (\mathbf w_{g(\mathbf y)}( t))
={}\sup_{\mathbf y\in \mathbb S^W} \tfrac{1}{2}\|g(\mathbf y)-\Phi(\mathbf w_{g(\mathbf y)}(t))\|^2\nonumber\\
\ge{}&\tfrac{1}{2}\max\Big(\|g(\mathbf y_t)-\Phi(\mathbf z_t))\|^2, \|g(-\mathbf y_t)-\Phi(\mathbf z_t)\|^2\Big)
\ge\tfrac{1}{2}(\tfrac{l}{2})^2>0.\label{eq:bulb}
\end{align}
Now suppose that $G\subset F_\Phi$. Then for any $\mathbf y\in \mathbb S^W$ the function $t\mapsto L_{g(\mathbf y)}(\mathbf w_{g(\mathbf y)}( t))$ monotonically converges to 0 as $t\to\infty$. However, since $\mathbb S^W$ is compact and $L_{g(\mathbf y)}(\mathbf w_{g(\mathbf y)}(t))$ is continuous in $\mathbf y$, such a convergence must be uniform over $\mathbf y\in \mathbb S^W,$ contradicting the lower bound \eqref{eq:bulb}.
\end{proof}

Note that we did not assume that $W<d$ in this theorem, but it is vacuous for $W\ge d$ because (again by Borsuk-Ulam) there are no continuous injective maps $g:\mathbb S^W\to \mathbb R^d$. On the other hand, there are plenty of such maps for $W<d$, implying in particular the following corollary.

\begin{corol}
If $W<d$, then $\mathcal H\setminus F_\Phi$ is dense in $\mathcal H$. 
\end{corol}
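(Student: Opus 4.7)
The plan is to derive this immediately from Theorem~\ref{th:borsuk_ulam} by showing that every open ball in $\mathcal H$ contains a topological copy of the $W$-sphere. Fix any $\mathbf f^*\in\mathcal H$ and any $\epsilon>0$; I need to exhibit at least one non-learnable target inside $B_{\mathbf f^*,\epsilon}=\{\mathbf f\in\mathcal H:\|\mathbf f-\mathbf f^*\|<\epsilon\}$.

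Since $W<d$, I have $W+1\le d$, so $\mathcal H$ contains a linear subspace $V$ of dimension $W+1$ (in the case $d=\infty$ this is obvious; in the finite-dimensional case it is just a dimension count). Consider the map $g:\mathbb S^W\to\mathcal H$ defined by $g(\mathbf y)=\mathbf f^*+\tfrac{\epsilon}{2}\iota(\mathbf y)$, where $\iota:\mathbb S^W\hookrightarrow V$ is the standard inclusion of the unit sphere in $V\cong\mathbb R^{W+1}$. Then $g$ is continuous and injective, and its image $G$ lies entirely in $B_{\mathbf f^*,\epsilon}$ since each point of $G$ is at distance exactly $\epsilon/2$ from $\mathbf f^*$.

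Applying Theorem~\ref{th:borsuk_ulam} to this $G$, there exists a target $\mathbf f\in G\setminus F_\Phi$. Because $\mathbf f\in B_{\mathbf f^*,\epsilon}$, every open ball in $\mathcal H$ meets $\mathcal H\setminus F_\Phi$, which is precisely the statement that $\mathcal H\setminus F_\Phi$ is dense in $\mathcal H$.

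There is essentially no obstacle here: the only thing to check is that a $(W+1)$-dimensional subspace of $\mathcal H$ exists when $W<d$, which is immediate from the definition of $d=\dim\mathcal H$ (allowing $d=\infty$). Theorem~\ref{th:borsuk_ulam} is stated for arbitrary continuous injective maps from $\mathbb S^W$ into $\mathcal H$, so scaling and translating a standard sphere is enough, and no further geometric work is required.
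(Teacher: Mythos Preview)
Your proof is correct and is exactly the argument the paper has in mind: the corollary is stated immediately after the remark that for $W<d$ there are plenty of continuous injective maps $g:\mathbb S^W\to\mathbb R^d$, and your construction of a small translated sphere inside an arbitrary ball is the canonical way to turn that remark into a density statement.
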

Our use of the Borsuk-Ulam theorem is inspired by \citet{devore1989optimal} who used it to establish lower bounds on nonlinear $n$-widths in an abstract approximation setting. 

\section{Almost guaranteed learning with two parameters}\label{sec:twoparam}

\begin{figure}
\begin{minipage}[c]{0.35\textwidth}
\begin{center}
\includegraphics[scale=4]{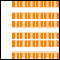}
\end{center}
\end{minipage}
\begin{minipage}[c]{0.63\textwidth}
\caption{In Theorem \ref{th:learn}, we ensure that the learnable set of targets $F_\Phi$ contains a  multidimensional ``fat'' Cantor set $F_0$ having almost full measure $\mu$. The set $F_0$ has the form $F_0=\cap_{n=1}^\infty \cup_\alpha B_\alpha^{(n)}$, where $\{B_\alpha^{(n)}\}_{n,\alpha}$ is a nested hierarchy of rectangular boxes in $\mathbb R^d$. Here, $n$ is the level of the hierarchy and $\alpha$ is the index of the box within the level.}\label{fig:learnable_targets}
\end{minipage}
\end{figure}

\begin{figure*}
    \begin{subfigure}{\textwidth}
    \centering
    \includegraphics[scale=1]{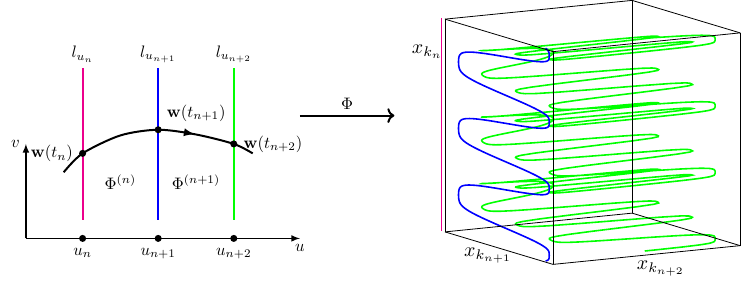}
    \caption{\textbf{Stage-wise decomposition of the map $\Phi$.} The map $\Phi$ is defined by its stages $\Phi^{(n)}=\Phi|_{u_n\le u\le u_{n+1}}$ separated by the level lines $l^{(n)}\equiv l_{u_n}=\{(u,v): u=u_n\}$ and respective level curves $\Phi(l^{(n)})$. Each stage $\Phi^{(n)}$ deforms the level curve $\Phi(l^{(n)})$ in the splitting direction $x_{k_{n+1}}$ to form the new level curve $\Phi(l^{(n+1)})$. A non-exceptional GF trajectory $\mathbf w(t)=(u(t), v(t))$ passes through all level lines. The splitting indices $k_{n}$ cycle over the values $1,\ldots,d$ to ensure convergence w.r.t. each coordinate.}
    \label{fig:two_param1}
    \end{subfigure}
    
    \bigskip    
    \begin{subfigure}{\textwidth}
    \centering
    \includegraphics[scale=1]{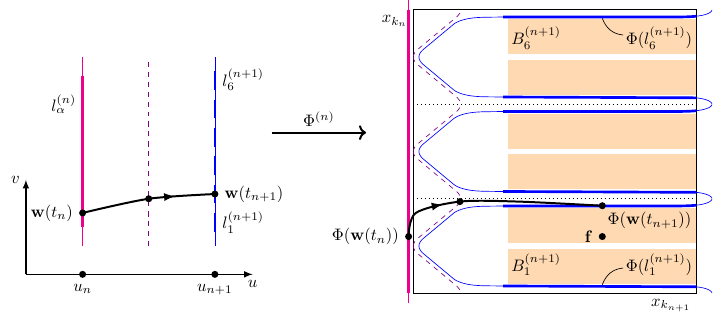}
    \caption{\textbf{Box splitting and curve-box alignment at the stage $\Phi^{(n)}$.} The stage curve $\Phi(l^{(n)})$ includes segments $l^{(n)}_\alpha$ (thick red and blue segments) aligned with respective boxes $B_\alpha^{(n)}$ of the box hierarchy. On the right, a box $B_\alpha^{(n)}$ (the big square) is split into $2s_n=6$ smaller boxes $B_\beta^{(n+1)}$ along the splitting direction $x_{k_n}$. Accordingly, the aligned segment $l_\alpha^{(n)}$ is transformed into 6 new aligned segments $l_\alpha^{(n+1)}$ (thick blue). The splitting only affects the coordinates $x_{k_n}$ and $x_{k_{n+1}}$. During the splitting, gaps are left in the direction $x_{k_n}$ between the child boxes, and in the direction $x_{k_n}$ between the level curve $\Phi(l^{(n)}_\alpha)$ and the child boxes, to accommodate convergent GF trajectories. Each non-exceptional GF trajectory $\mathbf w(t)$ passes through some aligned segments $l_\alpha^{(n)}, l_\beta^{(n+1)}$.}
    \label{fig:two_param2}
    \end{subfigure}
    
    \bigskip   
    \begin{subfigure}{\textwidth}
    \centering
    \includegraphics[scale=1.05]{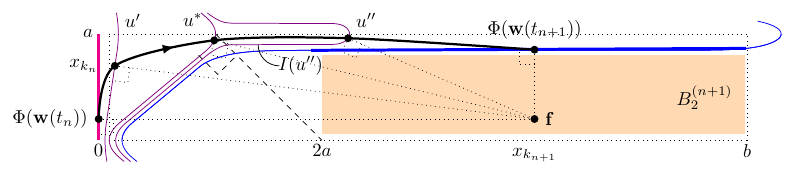}
    \caption{\textbf{Transition from $\Phi(l^{(n)})$ to $\Phi(l^{(n+1)})$} through intermediate level curves $\Phi(l_{u'}),\Phi(l_{u^*}),\Phi(l_{u''})$ 
    (violet). These curves ensure that during the $n$'th stage, for all targets $\mathbf f=(f_1,\ldots,f_d)$ in the respective box $B^{(n+1)}_\beta$, a point $\Phi(\mathbf w(t_n))$ having a coordinate $\Phi_{k_n}(\mathbf w(t_n))\approx f_{k_n}$ is moved by GF to a point $\Phi(\mathbf w(t_{n+1}))$ with a coordinate $\Phi_{k_{n+1}}(\mathbf w(t_{n+1}))\approx f_{k_{n+1}}$. The points $\Phi(\mathbf w(t))$ are approximately those closest to $\mathbf f$ on the respective level curves. To avoid local minima, the level curves $\Phi(l_{u})$ at each $u$ must be deformed at each $u$ so as to bring such points closer to $\mathbf f$. The desired propagation from $\Phi(\mathbf w(t_{n}))$ to $\Phi(\mathbf w(t_{n+1}))$ can be achieved by first deforming $\Phi(l_{u})$ so as to bring $\Phi(\mathbf w(t))$ to the tip of the line $\Phi(l_{u^*})$ (``gathering sub-stage''), and then extending this tip so as to let $\Phi(\mathbf w(t))$ slip off it at the appropriate position $x_{k_{n+1}}$ (``spreading sub-stage''). 
       }
    \label{fig:two_param3}
    \end{subfigure}
    \caption{The map $\Phi$ from Theorem \ref{th:learn} (see Section \ref{sec:proof_main} for details).}
\end{figure*}

Results of the previous section show that for models with $W<d$ parameters there is always a significant amount of non-learnable targets, and models with just $W=1$ parameter cannot learn sets of targets of positive Lebesgue measure.  
We give now our main result showing that already with $W=2$ parameters, one can design maps $\Phi:\mathbb R^W\to\mathbb R^d$ for which the learnable set is arbitrarily large with respect to a given probability distribution on the target space:
\begin{theor}[\ref{sec:proof_main}]\label{th:learn}
Let  $\mathcal H=\mathbb R^d$ with $d<\infty$, and let $\mu$ be any Borel probability measure on $\mathcal H$. 
Then for any $\epsilon>0$ there exists a $C^\infty$ map $\Phi:\mathbb R^2\to\mathcal H$ such that $\mu(\mathcal H\setminus F_{\Phi})<\epsilon$.
\end{theor}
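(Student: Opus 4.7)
The plan is to build $\Phi$ in tandem with a ``fat Cantor set'' $F_0 \subset \mathbb R^d$ for which GF converges, while arranging $\mu(F_0) \ge 1 - \epsilon$. By outer regularity of the Borel measure $\mu$, I first restrict attention to a large cube containing all but $\epsilon/2$ of the mass of $\mu$, and then construct $F_0 = \bigcap_n \bigcup_\alpha B_\alpha^{(n)}$ as in Figure \ref{fig:learnable_targets}: at stage $n$ each box $B_\alpha^{(n)}$ is split into $2 s_n$ child boxes $B_\beta^{(n+1)}$ along a coordinate direction $x_{k_n}$, with the splitting index $k_n$ cycling through $1,\ldots,d$ so that every coordinate is refined infinitely often. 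If the total $\mu$-mass of the gaps removed at stage $n$ is kept below $\epsilon 2^{-n-2}$---which is possible because the gap widths can be chosen freely in each active box---then $\mu(F_0) > 1 - \epsilon$.

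Next I would define $\Phi:\mathbb R^2 \to \mathbb R^d$ with parameters $(u,v)$, constructed stagewise on strips $S_n = \{u_n \le u \le u_{n+1}\}$ separated by the level lines $l^{(n)} = \{u = u_n\}$. The structural invariant, illustrated in Figures \ref{fig:two_param1}--\ref{fig:two_param2}, is that each curve $\Phi(l^{(n)})$ contains, for every box $B_\alpha^{(n)}$ surviving at stage $n$, a nearly-straight \emph{aligned segment} $l_\alpha^{(n)}$ traversing $B_\alpha^{(n)}$ in direction $x_{k_n}$ and parameterized monotonically by $v$. The stage map $\Phi^{(n)} = \Phi|_{S_n}$ deforms each $l_\alpha^{(n)}$ into the union of $2 s_n$ new aligned segments $l_\beta^{(n+1)} \subset B_\beta^{(n+1)}$, leaving prescribed gaps in direction $x_{k_n}$ between the level curve and the child boxes (to leave room for convergent GF trajectories) and in direction $x_{k_{n+1}}$ between neighbouring children. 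Smooth bump-function interpolation between the stages yields a global $C^\infty$ map with locally Lipschitz differential.

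To realize one stage, I would decompose $\Phi^{(n)}$ into a \emph{gathering} sub-stage followed by a \emph{spreading} sub-stage as in Figure \ref{fig:two_param3}. The gathering sub-stage deforms the intermediate level curves $\Phi(l_u)$ so that, for every target $\mathbf f$ in the active boxes at stage $n$, the nearest point $\Phi(\mathbf w(t))$ on the current curve is funneled toward a single tip $\Phi(l_{u^*})$, with the gradient of $L_{\mathbf f}$ having a uniformly positive $u$-component; this removes any loss barrier and forces GF to traverse the strip in finite time. The spreading sub-stage then extends the tip along $x_{k_{n+1}}$, so that $\Phi(\mathbf w(t))$ exits it at an $x_{k_{n+1}}$-coordinate matching the unique child box $B_\beta^{(n+1)} \ni \mathbf f$. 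Verifying this sorting property---that continuity of the projection-to-curve map propagates alignment from parent segment $l_\alpha^{(n)}$ to the correct child segment $l_\beta^{(n+1)}$---is the core geometric lemma of the argument.

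Iterating, for each $\mathbf f \in F_0$ the GF trajectory starting at $\mathbf 0$ crosses every $l^{(n)}$ at some time $t_n$ with $\Phi(\mathbf w(t_n)) \in l_{\alpha_n}^{(n)}$ aligned with the unique $B_{\alpha_n}^{(n)} \ni \mathbf f$; since $\operatorname{diam} B_{\alpha_n}^{(n)} \to 0$ we obtain $\Phi(\mathbf w(t_n)) \to \mathbf f$, so $\mathbf f \in F_\Phi$. The principal obstacle is not the measure bookkeeping but the simultaneous geometric control across stages: each stage must deliver positive $u$-drift and correct sorting \emph{uniformly} over the whole family of active targets, and the construction of later stages---performed only in a small ambient neighborhood of the current tips and gaps---must not disturb the dynamics already established by earlier stages. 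Keeping $\Phi$ globally $C^\infty$ while coordinating gap sizes, tip locations, and smoothness parameters across the infinite hierarchy is the delicate piece of work that the proof needs to manage.
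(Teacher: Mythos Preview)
Your proposal follows essentially the same architecture as the paper's proof: the fat Cantor set $F_0$, the stagewise decomposition of $\Phi$ along level lines $l^{(n)}$, the aligned segments, and the gathering/spreading sub-stages are all present in the paper with the same roles. Two points of divergence are worth flagging. First, in the paper the aligned pieces $\Phi(l_\alpha^{(n)})$ lie just \emph{outside} the boxes $B_\alpha^{(n)}$, near a 1D edge, rather than traversing the box; correspondingly the gap directions are the reverse of what you wrote (the $x_{k_n}$-gaps are between child boxes, and the $x_{k_{n+1}}$-gap separates the level curve from the children). Second, and more substantively, the paper makes your phrase ``the nearest point $\Phi(\mathbf w(t))$ on the current curve'' rigorous via a \emph{slow/fast} separation: one stretches the $u$-variable by a large factor $\lambda$ so that $|\partial_u\Phi|\ll|\partial_v\Phi|$, which forces the GF trajectory to track the level-curve minimizer $\mathbf w^*(u)=\argmin_{\mathbf w\in l_u}\|\mathbf f-\Phi(\mathbf w)\|$ to any desired accuracy. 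This device is what lets one describe $\Phi$ purely through the geometry of the level curves and is the mechanism behind the uniform $u$-drift you invoke; without it, your ``core geometric lemma'' would be hard to state cleanly. Your worry that later stages might disturb earlier dynamics is unnecessary: the stages are disjoint in $u$, and $u$-monotonicity means a trajectory never revisits a completed strip.
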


\textbf{Example:} Suppose again that we learn a linear function $f_{\mathbf k}(\mathbf x)=\mathbf k^T\mathbf x$ with $\mathbf k,\mathbf x\in\mathbb R^d$ for some $d$. Assuming a non-degenerate distribution of inputs $\nu$, the target space $\mathcal H\cong \mathbb R^d$. Suppose now that possible coefficient vectors $\mathbf k$ have, say, the prior distribution $\mu\sim\mathcal N(0,\mathbf 1_{d\times d})$ in $\mathbb R^d$. Then Theorem \ref{th:learn} states that for any $\epsilon>0$ there exists a two-parameter model $\Phi:\mathbb R^2\to\mathbb R^d$ such that for all target vectors $\mathbf k$ except for a set of $\mu$-measure $\epsilon$, for the GF trajectory $\mathbf w(t)\in\mathbb R^2$ defined by \eqref{eq:gd} we have $\lim_{t\to\infty}\Phi(\mathbf w(t))=\mathbf f_{\mathbf k}$.  
 
We give now a sketch of proof of Theorem \ref{th:learn}, illustrated by Figures \ref{fig:learnable_targets} and \ref{fig:two_param1}-\ref{fig:two_param3}.

The key challenge in proving this theorem is to ensure that for a majority of targets $\mathbf f$ the GF trajectory will not be trapped at a local minimum. This is difficult because, due to the low parameter dimension, a typical point $\mathbf w$ in the parameter space belongs to a large number of optimization trajectories with different targets $\mathbf f$, and all these trajectories are controlled by a single map $\Phi:\mathbb R^2\to\mathbb R^d$. 

The key idea of our construction is to implement an aligned hierarchical decompositions of both the parameter and target spaces so that each element of the hierarchy of target subsets can be served by a respective element of the hierarchy of parameter subsets. 

The targets for which we guarantee learnability form a $d$-dimensional Cantor set $F_0$ (product of one-dimensional sets) of almost full measure $\mu$ (see Figure \ref{fig:learnable_targets}). This  Cantor set is constructed by a sequence of ``carving'' (or ``splitting'') stages. Accordingly, the map $\Phi$ is sub-divided into a sequence of maps $\Phi^{(n)}$ associated with stripes of the $\mathbf w$-plane and aligned with the respective carving stages (see Figures \ref{fig:two_param1}-\ref{fig:two_param2}). 

One of the two parameters, $u$, always increases during GF for targets from $F_0$, and the map $\Phi$ can be described in terms of the ``level lines'' $l_{u}=\{(u,v):v\in\mathbb R\}$ in the parameter space and the respective ``level curves''  $\Phi(l_u)$ in the target space. To define stage-$n$ sub-maps $\Phi^{(n)}$ corresponding to levels $n$ of the target Cantor hierarchy, we choose a discretized sequence $0=u_0<u_1<u_2<\ldots$. The domains of the maps $\Phi^{(n)}$ are then separated by the respective level lines $l^{(n)}\equiv l_{u_n}$. The stage-$n$ map $\Phi^{(n)}$ can be thought of as describing the transformation of the level curve $\Phi(l^{(n)})$ to $\Phi(l^{(n+1)})$.

Each stage $n$ is associated with a particular splitting index $k_n\in\{1,\ldots,d\}$. The level curve $\Phi(l^{(n)})$ includes multiple linear segments $\Phi(l^{(n)}_\alpha)$ oriented along the $k_n$'th  axis and approximately coinciding (``aligned'') with certain one-dimensional edges of the boxes $B^{(n)}_\alpha$ that represent the $n$'th level of the Cantor set $F_0$. During each ``carving'' stage $n$, each box $B^{(n)}_\alpha$ is split into sub-boxes $B^{(n+1)}_\beta$ along the axis $k_n$. At the same time, the map          $\Phi^{(n)}$ describes the transformation of the aligned segments $\Phi(l^{(n)}_\alpha)$ to the next-level segments $\Phi(l^{(n+1)}_\beta),$ aligned with the boxes $B^{(n+1)}_\beta$ along the axis $k_{n+1}$.

During each stage $n$, a part of the box $B^{(n)}_\alpha$ is removed and the map $\Phi^{(n)}$ is adjusted so as to ensure that for each target $\mathbf f$ from the resulting Cantor set $F_0$ the GF trajectory goes through some aligned pieces $\Phi(l^{(n+1)}_\beta)$ to the very point $\mathbf f$. In a particular stage $n$, the GF trajectory ``goes around the corner'' of the next-level box (see Fig. \ref{fig:two_param3}), so that the agreement of the current approximation $\Phi(\mathbf w(t_n))$ with $\mathbf f$ in coordinate $k_{n+1}$ is substantially improved at the cost of a slightly degrading agreement in coordinate $k_n$. As the boxes become smaller, the overall disagreement between  $\Phi(\mathbf w(t_n))$ and $\mathbf f$ gradually vanishes.

For general targets in the current box $B^{(n)}_\alpha,$ the GF trajectory can get trapped at a local minimum -- in particular, if the coordinate $f_{k_{n+1}}$ of the target is close to the respective coordinate of the aligned level piece $\Phi(l^{(n)}_\alpha)$. For this reason, some parts of the box $B^{(n)}$ are removed during splitting for the next stage. The total measure of the removed parts can be made arbitrarily small by adjusting splitting parameters. In this way we ensure that all the vectors $\mathbf f\in F_0$ are learnable and the measure $\mu(F_0)$ is arbitrarily close to the full measure.

\section{Models expressible by elementary functions
}\label{sec:elem}
The model $\Phi$ constructed in Theorem \ref{th:learn} involves an infinite hierarchy of maps $\Phi^{(n)}$ and as a result (and in contrast to conventional models such as neural networks) is not expressible by a single elementary function. 
It is natural to ask if this non-elementariness is essential or only a feature of our proof.
We conjecture it to actually be a necessary feature of models $\Phi:\mathbb R^W\to\mathbb R^d$ when $W<d$ and the set $F_\Phi$ of GF-learnable targets is sufficiently large, say has a positive Lebesgue measure in $\mathbb R^d$. 

One setting in which we can prove this conjecture is when the closure $\overline{\Phi(\mathbb R^W)}$ of the image 
$
\Phi(\mathbb R^W)
$
 has Lebesgue measure 0. Obviously, this is a sufficient condition for the set $F_{\Phi}$ of GF-learnable targets to have Lebesgue measure~0. We show that $\overline{\Phi(\mathbb R^W)}$ has measure 0 for so-called \emph{Pfaffian}  functions known to have strong finiteness properties \citep{khovanskii}. Pfaffian functions include all elementary functions, but not necessarily on the largest domain of their definition; most importantly, $\sin$ is Pfaffian only when considered on a bounded interval.  

Precisely, a \emph{Pfaffian chain} is a sequence $g_1,\ldots,g_l$ of real analytic functions defined on a common connected domain $U\subset\mathbb R^W$ and such that the equations
$$\tfrac{\partial g_i}{\partial w_j}(\mathbf w)=P_{ij}(\mathbf w,g_1(\mathbf w),\ldots,g_i(\mathbf w)),\quad \genfrac{}{}{0pt}{1}{1\le i\le l}{1\le j\le W}
$$
hold in $U$ for some polynomials $P_{ij}.$  
 A \emph{Pfaffian function} in the chain $(g_1,\ldots,g_l)$ is a function on $U$ that can be expressed as a polynomial $P$ in the variables $(\mathbf w, g_1(\mathbf w),\ldots,g_l(\mathbf w))$. 
\emph{Complexity} of the Pfaffian function is determined by the length $l$ of the chain, the maximum degree $\alpha$ of the polynomials $P_{ij}$, and the degree $\beta$ of the polynomial $P$, and so can be defined as the triplet $(l,\alpha,\beta)$.  
We say that a vector-valued function $\Phi(\mathbf w)=(\Phi_1(\mathbf w), \ldots, \Phi_d(\mathbf w))$ is Pfaffian if each component $\Phi_i$ is Pfaffian with the same domain $U$. See  \cite{khovanskii, zell1999betti, gabrielov2004complexity} for background and further details. The following shows that the set of Pfaffian functions is quite large.

\begin{theor}[Khovanskii, ``elementary functions are Pfaffian'']\label{th:khova} Suppose that a function $g$ on a domain $U\subset \mathbb R^W$ is defined by a formula constructed from the variables $w_1,\ldots ,w_W$ using finitely many real numbers, standard arithmetic operations ($+,-,\times,/$), elementary functions $\ln$, $\exp$, $\sin$, $\arcsin$, and compositions. Suppose that for each $\mathbf w\in U$ the value $g(\mathbf w)$ is well-defined in the sense that during the computation the functions $\ln$ and $\arcsin$ are applied on the intervals $(0,\infty)$ and $(-1,1)$, respectively, and there is no division by 0. Moreover, suppose that there is a bounded interval $(a,b)$ to which the arguments of $\sin$ always belong for all $\mathbf w\in U$. Then the function $g$ is Pfaffian with complexity depending only on the size of the formula and the length of the interval $(a,b)$.
\end{theor}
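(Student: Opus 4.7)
The plan is structural induction on the formula defining $g$, carrying along a single Pfaffian chain $(g_1,\ldots,g_l)$ on a connected domain $U$ that represents every subexpression simultaneously and tracking how the complexity triple $(l,\alpha,\beta)$ grows with each node of the formula. The base case is immediate: constants and coordinate projections $w_j$ are polynomials and need no chain. Arithmetic is easy as well: sums and products of functions Pfaffian in a common chain remain polynomial in that chain and only enlarge $\beta$, while a reciprocal $1/f$ on $\{f\ne 0\}$ is handled by appending $g_{l+1}=1/f$, whose derivative is $\partial g_{l+1}/\partial w_j=-g_{l+1}^{2}\,\partial f/\partial w_j$, polynomial in the extended chain. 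Two subexpressions Pfaffian in different chains are first merged by concatenating the chains and passing to the connected component of a base point in the intersection of the two domains.

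For the transcendental elementary functions the chain is extended explicitly. For $\exp\circ f$, append $g_{l+1}=\exp(f)$ with $\partial g_{l+1}/\partial w_j=g_{l+1}\,\partial f/\partial w_j$. For $\ln\circ f$ on $\{f>0\}$, append $1/f$ and then $\ln(f)$, whose derivative is $(1/f)\,\partial f/\partial w_j$. For $\arcsin\circ f$ on $\{|f|<1\}$, append $h=1/\sqrt{1-f^{2}}$ (which satisfies $\partial h/\partial w_j=h^{3}f\,\partial f/\partial w_j$), then $\arcsin(f)$, whose derivative is $h\,\partial f/\partial w_j$. In every case the new entries are defined on the open set dictated by the well-definedness hypothesis on $g$, and Pfaffianness of a composition $F\circ f$ reduces to Pfaffianness of the univariate $F$ once $f$ is already Pfaffian in the ambient chain.

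The genuinely delicate case, and the reason the hypothesis restricts the arguments of $\sin$ to a bounded interval $(a,b)$, is $\sin\circ f$. The naive two-element chain $(\sin f,\cos f)$ formally satisfies the Pfaffian equations $g_1'=g_2\,f'$, $g_2'=-g_1\,f'$, but on unbounded argument ranges $\sin f$ has infinitely many zeros, which is incompatible with the global finiteness axioms underlying Khovanskii's complexity framework (and without which the theorem would be useless to the calling text). Under the hypothesis $f(U)\subset(a,b)$, I will cover $(a,b)$ by $N=O(b-a)$ overlapping subintervals of length less than $\pi$, use the half-angle substitution $t_k=\tan\bigl((f-c_k)/2\bigr)$, which is Pfaffian via $t_k'=\tfrac{1}{2}(1+t_k^{2})\,f'$, to write $\sin f=2t_k/(1+t_k^{2})$ and $\cos f=(1-t_k^{2})/(1+t_k^{2})$ on each piece, and glue the local Pfaffian presentations. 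I expect the main obstacle to be precisely this amalgamation: verifying that the finitely many local chains fit together into a single Pfaffian chain on a connected domain, and that $N$, and hence the final complexity triple $(l,\alpha,\beta)$, is controlled solely by $b-a$ and the formula size, independently of the particular subexpression $f$. Once this is resolved, the inductive bookkeeping for nested compositions runs through by adding a bounded number of chain elements per node of the formula.
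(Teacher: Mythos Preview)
The paper does not prove this theorem; it is quoted as Khovanskii's result with references. Your structural induction and the treatment of arithmetic, $\exp$, $\ln$, and $\arcsin$ are standard and correct, but your handling of $\sin$ has a genuine gap.

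A minor point first: under the paper's definition a Pfaffian chain is \emph{triangular} --- $\partial g_i/\partial w_j$ must be polynomial in $(\mathbf w, g_1,\ldots,g_i)$ only --- so the relation $g_1'=g_2 f'$ for $g_1=\sin f$ already disqualifies $(\sin f,\cos f)$ as a chain, bounded argument range or not; the failure is not about ``infinitely many zeros'' but about triangularity. The substantive gap is the gluing. A Pfaffian chain lives on a \emph{single} connected domain, so every $t_k=\tan\bigl((f-c_k)/2\bigr)$ you introduce must be analytic on all of $U$. But whenever $b-a\ge 2\pi$, for any center $c_k$ the pole set $\{c_k+(2m+1)\pi:m\in\mathbb Z\}$ meets $(a,b)$, so each $t_k$ blows up somewhere on $U$; you cannot put them all in one chain, and analytic functions admit no partition-of-unity patching. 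Restricting to a connected component of the common domain of all the $t_k$ would discard most of $U$. The correct device is scaling, not covering: after shifting by the midpoint $c=(a+b)/2$, pick an integer $N>(b-a)/(2\pi)$ and append the single function $t=\tan\bigl((f-c)/(2N)\bigr)$, which is analytic on all of $U$ and satisfies $\partial t/\partial w_j=\tfrac{1}{2N}(1+t^2)\,\partial f/\partial w_j$, together with $h=1/(1+t^2)$. Then $\sin\bigl((f-c)/N\bigr)=2th$ and $\cos\bigl((f-c)/N\bigr)=(1-t^2)h$, and the multiple-angle formulas express $\sin(f-c),\cos(f-c)$ --- hence $\sin f$ via the constants $\sin c,\cos c$ --- as polynomials of degree $O(N)$ in $t,h$. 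The chain length grows by two per occurrence of $\sin$, $\alpha$ stays bounded, and $\beta$ grows like $O(N)=O(b-a)$, which is exactly the dependence the theorem asserts.
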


Our theorem on learnable targets can then be stated as:
\begin{theor}[\ref{sec:proof_th_pfaff}]\label{th:main_pfaff}
Suppose that $\Phi:\mathbb R^W\to \mathbb R^d$ is a Pfaffian map and $W<d$. Then the closure $\overline{\Phi(\mathbb R^W)}$ has Lebesgue measure 0 in $\mathbb R^d$. In particular, the GF-learnable targets $F_\Phi$ have Lebesgue measure 0 in $\mathbb R^d$. 
\end{theor}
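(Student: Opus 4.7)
The plan is to reduce Theorem~\ref{th:main_pfaff} to the claim that $\overline{\Phi(\mathbb R^W)}$ has Lebesgue measure zero in $\mathbb R^d$. The second assertion then follows immediately from the first, since any target $\mathbf f\in F_\Phi$ satisfies $\Phi(\mathbf w(t))\to\mathbf f$ along the GF trajectory and so is a limit point of the image, giving $F_\Phi\subseteq\overline{\Phi(\mathbb R^W)}$.

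That $\Phi(\mathbb R^W)$ itself has measure zero is elementary. Since $W<d$, the differential $d\Phi(\mathbf w)$ has rank at most $W<d$ at every $\mathbf w$, so every point of $\mathbb R^W$ is a critical point of $\Phi$; Sard's theorem then says the set of critical values $\Phi(\mathbb R^W)$ has Lebesgue measure zero. Equivalently, $\Phi$ is locally Lipschitz and each image $\Phi(\overline{B_R})$ of a closed ball $\overline{B_R}\subset\mathbb R^W$ has Hausdorff dimension at most $W$. The real content of the theorem is in upgrading this to the closure: for a generic smooth $\Phi$ the closure of the image can have much larger Hausdorff dimension (for instance one can build analytic curves in $\mathbb R^2$ whose oscillations become arbitrarily dense), and excluding such pathologies is exactly where the Pfaffian hypothesis is used.

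To handle the closure I would invoke the tame geometric theory of Pfaffian sets. The graph $\{(\mathbf w,\Phi(\mathbf w)):\mathbf w\in\mathbb R^W\}\subset\mathbb R^{W+d}$ is a semi-Pfaffian set of dimension $W$, and $\Phi(\mathbb R^W)$ is its projection, hence sub-Pfaffian. The key structural facts provided by the Khovanskii-Gabrielov finiteness theorems (see \cite{khovanskii, gabrielov2004complexity, zell1999betti}) are that sub-Pfaffian sets admit finite stratifications into smooth manifolds with a well-defined dimension, and that the closure of a sub-Pfaffian set is again describable within the same tame framework with no increase in dimension. Combining these gives $\dim\overline{\Phi(\mathbb R^W)}\le W<d$, and a subset of $\mathbb R^d$ of dimension strictly less than $d$ has Lebesgue measure zero (it is contained in a countable union of graphs of smooth functions of fewer than $d$ variables, each of measure zero by Fubini).

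The main obstacle I expect is justifying the dimension bound on the closure globally rather than only on compact pieces. On each closed ball $\overline{B_R}\subset\mathbb R^W$ the argument is straightforward since restricted Pfaffian functions generate an o-minimal structure, so $\overline{\Phi(\overline{B_R})}$ is definable of dimension at most $W$ and hence measure zero. However $\overline{\Phi(\mathbb R^W)}$ may strictly contain $\bigcup_R\overline{\Phi(\overline{B_R})}$ because of accumulation from sequences $\mathbf w_n$ with $\|\mathbf w_n\|\to\infty$, and one must rule out these "limits at infinity" contributing positive measure. Here the polynomial growth estimates and Rolle-type oscillation bounds satisfied by Pfaffian functions should provide the needed quantitative control, either via an explicit stratification of the frontier of the image or via a change of variables compactifying $\mathbb R^W$ into a bounded domain on which the restricted-Pfaffian o-minimal machinery applies directly. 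I expect this reduction to be the bulk of the appendix proof.
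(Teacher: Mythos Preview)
Your strategy is genuinely different from the paper's, and while the high-level intuition is sound, the argument as written has a real gap that the paper avoids rather than fills. You want to invoke tame-geometry machinery: the image is sub-Pfaffian, hence of dimension at most $W$, and then ``closure does not raise dimension.'' But this last step is exactly the delicate point, and you correctly flag it: unrestricted Pfaffian functions on $\mathbb R^W$ do \emph{not} straightforwardly generate an o-minimal structure, and a compactification of the domain will not in general keep the functions Pfaffian (think of $e^w$ as $w\to\infty$). The results you would need---Gabrielov-type frontier/closure theorems for global sub-Pfaffian sets---exist but are themselves substantial, so your proposal defers the whole difficulty to a black box whose applicability you have not verified.

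The paper bypasses all of this with a direct box-counting argument that uses only Khovanskii's elementary finiteness bound (the number of connected components of a Pfaffian level set $\{g_1=\dots=g_k=0\}$ is bounded in terms of the complexities alone). After reducing to $W=d-1$ and to the cube $[0,1]^d$, one lays down a grid of $\tfrac1N$-cubes with a generically shifted origin (chosen via Sard so that the grid's $(d-1)$-skeleton meets the image only at regular values of the drop-one-coordinate maps $\widehat\Phi_i$). Any grid cube whose interior meets $\overline{\Phi(\mathbb R^W)}$ must, by connectedness of the image, have its boundary meet $\Phi(\mathbb R^W)$; Khovanskii's bound then gives $O(N^{d-1})$ such cubes, hence measure $O(1/N)\to 0$. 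The point is that the connected-component bound is \emph{global}---it holds on all of $\mathbb R^W$ with a uniform constant---so the ``accumulation from infinity'' problem you worried about simply does not arise: a cube counted because of a limit point from $\|\mathbf w_n\|\to\infty$ still contributes a boundary intersection and is captured by the same count. Your o-minimal route, if made rigorous, would be more conceptual; the paper's route is more elementary and entirely self-contained given Khovanskii's theorem.
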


If $\Phi$ is defined by elementary functions involving $\sin$ on an unbounded domain, then $\overline{\Phi(\mathbb R^W)}$ need not have Lebesgue measure 0. As the simplest family of  examples, consider $\Phi=(\Phi_1,\ldots, \Phi_d)$ given by
\begin{equation}\label{eq:torus_flow}
\Phi_i(\mathbf w)=\sin\Big(\sum_{j=1}^W a_{ij}w_j\Big),\quad i=1,\ldots,d,
\end{equation}
with some constants $a_{ij}$. Kronecker's theorem \citep{kronecker1884naherungsweise, gonek2016kronecker} implies that the points $(\sum_{j}a_{1j}w_j,\ldots,\sum_{j}a_{dj}w_j)$ densely fill the torus $(\mathbb R/2\pi\mathbb Z)^d$ as $\mathbf w$ runs over $\mathbb R^W$ whenever the vectors $\mathbf a_i=(a_{i1},\ldots,a_{iW}), i=1,\dots, d,$ are linearly independent over the rationals $\mathbb Q$. Accordingly, in this case $\overline{\Phi(\mathbb R^W)}=[-1,1]^d.$  However, the GF-learnable targets will still have Lebesgue measure 0 due to the prevalence of trapping local minima, as can be seen by a suitable extension of Theorem~ \ref{th:not_dense}: 

\begin{prop}[\ref{sec:proof_no_dense2}]\label{prop:no_dense2}
Let $1\le W<d<\infty$ and $\Phi:\mathbb R^W\to\mathbb R^{d}$ be $C^2$. Suppose that for some open $U\subset \mathbb R^d$ the first and second derivatives of $\Phi$ are uniformly bounded on $\Phi^{-1}(U)$, and also the Jacobi matrix $J(\mathbf w)=\tfrac{\partial \Phi}{\partial\mathbf w}(\mathbf w)$ is uniformly non-degenerate there in the sense that the lowest eigenvalue of $J^*(\mathbf w)J(\mathbf w)$ is uniformly bounded away from 0 on $\Phi^{-1}(U)$. Then $F_\Phi\cap U\subset \Phi(\mathbb R^W)$. 
\end{prop}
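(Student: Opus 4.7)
The plan is to show that for any $\mathbf f \in F_\Phi \cap U$, the GF trajectory $\mathbf w(t)$ eventually gets trapped inside a bounded neighborhood, so that some subsequence converges to a $\mathbf w_*$ with $\Phi(\mathbf w_*) = \mathbf f$, giving $\mathbf f \in \Phi(\mathbb R^W)$.

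The first step is routine: because $\mathbf f \in F_\Phi$ we have $L(\mathbf w(t)) \to 0$ and monotone decrease of $\|\Phi(\mathbf w(t)) - \mathbf f\|$. Since $\mathbf f$ lies in the open set $U$, there is $T_0$ such that for all $t \geq T_0$ we have $\Phi(\mathbf w(t)) \in U$, hence $\mathbf w(t) \in \Phi^{-1}(U)$, and the uniform hypotheses apply from this time on. Let $c, M, M_2$ denote the uniform bounds $J^\top J \succeq c I$, $\|J\| \leq M$, $\|D^2 \Phi\| \leq M_2$ on $\Phi^{-1}(U)$, and set $d_0 := d(\mathbf f, \partial U) > 0$.

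The core of the argument adapts the ``loss barrier'' construction from the proof of Theorem \ref{th:not_dense}, but applies it at a point $\mathbf w_0 := \mathbf w(T)$ on the trajectory rather than at the origin. Writing $v_0 := \Phi(\mathbf w_0) - \mathbf f$ and $\mathbf w = \mathbf w_0 + \xi$, and using the Taylor expansion $\Phi(\mathbf w) = \Phi(\mathbf w_0) + J(\mathbf w_0)\xi + R(\xi)$ with $\|R(\xi)\| \leq \tfrac{M_2}{2}\|\xi\|^2$, a short computation yields
\[
L(\mathbf w) - L(\mathbf w_0) \;\geq\; \tfrac{1}{2}\|J\xi + R(\xi)\|^2 - \|v_0\|\cdot \|J\xi + R(\xi)\|.
\]
Choosing a uniform radius $r := \min(\sqrt{c}/M_2,\ d_0/(2M))$, the lower bound $\|J\xi + R\| \geq \tfrac{1}{2}\sqrt{c}\,r$ on the sphere $\|\xi\| = r$ makes the right-hand side strictly positive as soon as $\|v_0\| < \tfrac{1}{4}\sqrt{c}\,r$, a condition that holds for all sufficiently large $T$ since $\|v_0\| \to 0$. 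Consequently $L$ strictly exceeds $L(\mathbf w_0)$ on $\partial B(\mathbf w_0, r)$, and by monotonicity of $L$ along GF the trajectory for $t \geq T$ cannot escape $B(\mathbf w_0, r)$.

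Having trapped the trajectory in a bounded set, Bolzano--Weierstrass gives a subsequential limit $\mathbf w_* \in \overline{B(\mathbf w_0, r)}$, and continuity of $\Phi$ yields $\Phi(\mathbf w_*) = \lim_n \Phi(\mathbf w(t_n)) = \mathbf f$, so $\mathbf f \in \Phi(\mathbb R^W)$. The main obstacle is the technical bookkeeping to ensure that the Taylor expansion and its remainder estimate remain valid on the whole closed ball $\overline{B(\mathbf w_0, r)}$: this requires using the uniform Lipschitz bound $M$ to verify that the entire ball is contained in $\Phi^{-1}(U)$ once $T$ is large enough that $\Phi(\mathbf w_0)$ is close to $\mathbf f$, so that the uniform constants $c, M, M_2$ may be legitimately applied throughout the estimate.
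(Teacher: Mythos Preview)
Your proof is correct and follows essentially the same approach as the paper: both establish a loss barrier on the sphere $\partial B(\mathbf w_0, r)$ around a low-loss point $\mathbf w_0$ on the trajectory, trap the GF inside, and recover $\mathbf f \in \Phi(\overline{B(\mathbf w_0, r)})$ via compactness. The only cosmetic difference is that the paper lets $r=\epsilon^{1/3}$ shrink with the loss level $\epsilon=L_{\mathbf f}(\mathbf w_0)$, whereas you fix $r$ in terms of the uniform constants and instead take $\|v_0\|$ small.
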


\begin{corol}[\ref{sec:proof_torus_lebesgue0}]\label{corol:torus_lebesgue0}
Let $1\le W<d<\infty$ and $\Phi:\mathbb R^W\to\mathcal H=\mathbb R^{d}$ be given by Eq. \eqref{eq:torus_flow} with some constants $a_{ij}$. Then, regardless of these constants, the set $F_{\Phi}$ of respective learnable targets has Lebesgue measure~0. 
\end{corol}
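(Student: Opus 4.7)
The plan is to combine Proposition~\ref{prop:no_dense2} with the fact that $\Phi(\mathbb R^W)$ itself has Lebesgue measure $0$ in $\mathbb R^d$. Since $\Phi$ is smooth from $\mathbb R^W$ to $\mathbb R^d$ with $W<d$, its differential never has rank $d$, so every point of $\Phi(\mathbb R^W)$ is a critical value and Sard's theorem gives $|\Phi(\mathbb R^W)|=0$. The reason we cannot stop here is Kronecker's theorem, already noted in the excerpt: the closure $\overline{\Phi(\mathbb R^W)}$ can equal all of $[-1,1]^d$, so \emph{a priori} learnable targets might lie in this closure outside the image. The role of Proposition~\ref{prop:no_dense2} is exactly to forbid such ``closure-only'' learnable targets, at least on the interior of the cube.

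First I would reduce to the case $\operatorname{rank}A=W$, where $A=(a_{ij})$ is the $d\times W$ coefficient matrix. If $\operatorname{rank}A=r<W$, then $\Phi$ depends only on the orthogonal projection of $\mathbf w$ onto the $r$-dimensional span of the rows $\mathbf a_i$; an orthogonal change of coordinates in parameter space (which preserves both the GF equation and the initial condition $\mathbf 0$) makes $\Phi$ independent of the last $W-r$ coordinates, and GF reduces to GF for a map $\widetilde\Phi:\mathbb R^r\to\mathbb R^d$ of the same sinusoidal form. Since $r<d$ and $F_\Phi=F_{\widetilde\Phi}$, it is enough to treat $r=W$ (the degenerate case $r=0$ makes $\Phi$ constant and $|F_\Phi|=0$ trivially).

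Next, for each $\delta\in(0,1)$ set $U_\delta=(-1+\delta,1-\delta)^d$. On $\Phi^{-1}(U_\delta)$ every coordinate satisfies $|\sin(\mathbf a_i\cdot\mathbf w)|\le 1-\delta$, hence $|\cos(\mathbf a_i\cdot\mathbf w)|\ge\sqrt{2\delta-\delta^2}$. The Jacobian factors as $J(\mathbf w)=D(\mathbf w)A$ with $D(\mathbf w)$ diagonal consisting of these cosines; combined with $\operatorname{rank}A=W$, the smallest eigenvalue of $J(\mathbf w)^{*}J(\mathbf w)$ is uniformly bounded below on $\Phi^{-1}(U_\delta)$. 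The first and second derivatives of $\Phi$ are globally bounded, being trigonometric polynomials in $\mathbf a_i\cdot\mathbf w$ with fixed coefficients. So Proposition~\ref{prop:no_dense2} applies and gives $F_\Phi\cap U_\delta\subset\Phi(\mathbb R^W)$.

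To finish, note that $F_\Phi\subset\overline{\Phi(\mathbb R^W)}\subset[-1,1]^d$ because GF-convergence to $\mathbf f$ forces $\mathbf f$ into the closure of the image. Decomposing $F_\Phi=\bigl(F_\Phi\cap(-1,1)^d\bigr)\cup\bigl(F_\Phi\cap\partial[-1,1]^d\bigr)$, the first piece equals $\bigcup_{n\ge 1}(F_\Phi\cap U_{1/n})\subset\Phi(\mathbb R^W)$ and hence has measure $0$, while the second lies in the measure-zero boundary of the unit cube. Therefore $|F_\Phi|=0$. The only genuine point requiring care is the uniformity of the lower bound on the cosines on $\Phi^{-1}(U_\delta)$ so that the hypothesis of Proposition~\ref{prop:no_dense2} is actually met; the rank-deficient reduction and the boundary-of-cube contribution are essentially bookkeeping.
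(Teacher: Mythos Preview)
Your proposal is correct and follows essentially the same approach as the paper: reduce to a full-rank coefficient matrix via an orthogonal change of parameters, apply Proposition~\ref{prop:no_dense2} on the open cubes $(-c,c)^d$ using the factorization $J(\mathbf w)=D(\mathbf w)A$ with uniformly nonvanishing cosines, and then absorb the remainder into the measure-zero boundary $\partial[-1,1]^d$ and the measure-zero image $\Phi(\mathbb R^W)$. Your write-up is slightly more explicit (the union over $\delta=1/n$, the invocation of Sard), but the argument is the same.
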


In Figure \ref{fig:sin_fill} we illustrate this result for $W=1$.

\begin{figure}
\begin{minipage}[c]{0.45\textwidth}
\begin{center}
\begin{tikzpicture}
    \begin{axis}[
    scale=0.65,
        axis equal,
        xlabel={$x = \sin(w)$},
        ylabel={$y = \sin(\sqrt{2}w)$},
        domain=-25:25,
        samples=500,
        xmin=-1, xmax=1,
        ymin=-1, ymax=1
    ]
        \addplot[thin, orange] ({sin(deg(x))}, {sin(deg(sqrt(2)*x))});
        \addplot[thick, blue, domain=0:0.4, postaction={decorate},
            decoration={markings, mark=at position 1 with {\arrow[scale=1.2]{stealth}}}] ({sin(deg(x))}, {sin(deg(sqrt(2)*x))});
                
        \node[above, black] at (axis cs:{sin(deg(0.4))},{sin(deg(sqrt(2)*0.4))}) 
             {$\Phi(w(t=+\infty))$};
                
        \addplot[dashed, thick] coordinates {({sin(deg(0.4))}, {sin(deg(sqrt(2)*0.4))}) (0.6, 0.37)};
        \addplot[only marks, mark=*] coordinates {(0.6, 0.37)}
            node[right,black] {$\mathbf f$};
        \addplot[only marks, mark=*] coordinates {(0, 0)}
            node[below,black] {$\Phi(w(t=0))$};
    \end{axis}
\end{tikzpicture}
\end{center}
\end{minipage}
\begin{minipage}[c]{0.51\textwidth}
\caption{The curve $\Phi(w)=(\sin(w),\sin(\sqrt{2}w))$ densely fills the square $[-1,1]^2$, but for all targets $\mathbf f$ except for a set of Lebesgue measure 0 the respective GF trajectory $w(t)$ is trapped at a spurious local minimum so that $\Phi(w(t))\not\to\mathbf f$. Corollary \ref{corol:torus_lebesgue0} shows that this is true for all models \eqref{eq:torus_flow} with any number of parameters $W<d$.}\label{fig:sin_fill}
\end{minipage}
\end{figure}
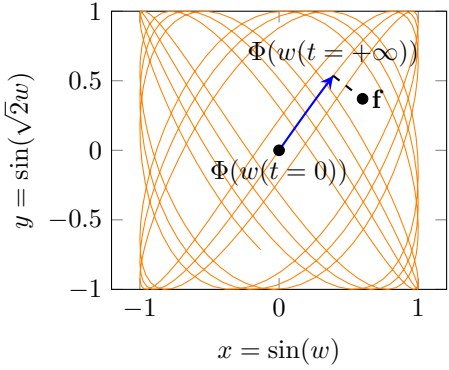

Summarizing, Theorems  \ref{th:khova}, \ref{th:main_pfaff} imply that if the scalar components of the map $\Phi:\mathbb R^W\to\mathbb R^d$ with $W<d$ are elementary functions not involving $\sin$ (or related functions) acting on an unbounded domain, then $\Phi$  can GF-learn only exceptional targets in $\mathbb R^d$. Moreover, even if $\Phi$ involves $\sin$, there is no obvious mechanism how this would help GF-learn a non-negligible set of targets: examination of the basic family of $\sin$-models \eqref{eq:torus_flow} in Corollary \ref{corol:torus_lebesgue0} suggests that GF trajectories would still be predominantly trapped at spurious minima.

\section{Discussion}\label{sec:discuss}
\paragraph{Main takeaways.} Our results show that GF-learning with the number of parameters less than the target dimension is objectively problematic, but not impossible. By Theorems \ref{th:not_dense} and \ref{th:borsuk_ulam}, the set of non-learnable targets is dense in the target space, while the set of learnable targets is not. Also, Theorem \ref{th:main_pfaff} shows that the set of learnable targets has zero Lebesgue measure for models expressible by elementary functions not involving $\sin$ on an unbounded domain. 

Nevertheless, we have shown in Theorem \ref{th:learn} that if the targets are described by a known probability distribution, then it is possible to handcraft a (fairly complicated) model with just two parameters that learns the targets with probability arbitrarily close to 1. The learnable targets in our proof form a multi-dimensional Cantor set. Such a complicated structure is not surprising, since by Theorem \ref{th:borsuk_ulam} each subset of the target space homeomorphic to the 2-sphere must contain non-learnable targets. One can expect learnable sets to be more regular for models with a larger number of parameters (see an open question below).

\paragraph{Open questions.} Our results leave various open questions, especially with regard to more detailed characterization of learnable sets of targets.

\emph{Target measures with $\mu(\mathcal H)=\infty$.} It was crucial for our proof of main Theorem \ref{th:learn} that we could restrict our attenton to targets lying in a bounded box in $\mathbb R^d$. We could consider only such targets because if $\mu$ is a Borel measure on $\mathbb R^d$ and $\mu(\mathbb R^d)<\infty$, then $\mu (\mathcal H\setminus B)$ can be made arbitrarily small for a suitable bounded box $B$. However, one can ask if Theorem \ref{th:learn} also holds for measures with $\mu(\mathbb R^d)=\infty,$ e.g. the Lebesgue measure. In this case there is no reduction to a bounded box, and our methods don't seem to work.

\emph{Infinite-dimensional target spaces.} We prove Theorem \ref{th:learn} only for finite-dimensional target spaces $\mathcal H$, but one can also ask if it holds for an infinite-dimensional separable Hilbert space. As discussed in Section  \ref{sec:setting}, this would cover the case of general distributions of inputs $\mathbf x$ for target functions.

\emph{Non-density of the learnable targets for degenerate models.} Our proof that the subset $F_{\Phi}$ of learnable targets cannot be dense in the target space $\mathbb R^d$ if the number $W$ of parameters is less than $d$ (Theorem \ref{th:not_dense}) heavily relies on the relatively strong assumption that  $\Phi$ is $C^2$ and has a full rank Jacobian at the initial point. It would be interesting to clarify if this result holds without nondegeneracy assumptions and under weaker regularity assumptions, say for $\Phi$ differentiable with a Lipschitz gradient as sufficient for local integrability of the gradient flow.

\emph{Learnable sets for general $1<W<d$.} It would be interesting to generally describe target sets that can be GF-learned for $1<W<d$. Our Theorem \ref{th:learn} only does that for $W=2$ and for a family of multi-dimensional Cantor sets. Theorem \ref{th:borsuk_ulam} imposes weaker conditions on learnable target sets as $W$ increases (since higher-dimensional spheres contain lower-dimensional ones, but not the other way around), suggesting that with higher $W$ learnable sets become larger and more regular.

\emph{Non-learnability using elementary functions involving $\sin$.} As discusssed in Section \ref{sec:elem}, we expect that the high-probability learning proved in our main Theorem \ref{th:learn} cannot be established using models expressed through elementary functions, even if they involve $\sin$ with unbounded arguments.

\section*{Acknowledgment}
The author thanks the reviewers for several useful suggestions that helped improve the paper.

\bibliography{{../../universalFormulas.bib}}
\bibliographystyle{plainnat}


\appendix

\section{Proof of Theorem \ref{th:learn}}\label{sec:proof_main}

\noindent
\textbf{Hierarchical structure of the learnable set.} We will construct a set $F_0$ such that $\mu(F_0)>1-\epsilon$ and for a suitable model $\Phi:\mathbb R^2\to\mathbb R^d$ the targets $\mathbf f$ are $\Phi$-learnable, i.e. $F_0\subset F_\Phi$, thus ensuring that $\mu(F_\Phi)>1-\epsilon$. We will occasionally refer to targets from $F_0$ (respectively, from the complement $\mathbb R^d\setminus F_0$) and the associated GF trajectories as \emph{non-exceptional} (respectively, \emph{exceptional}). The set $F_0$ has the form $F_0=\cap_{n=1}^\infty \cup_\alpha B_\alpha^{(n)}$, where $B_\alpha^{(n)}$ is a nested hierarchy of rectangular boxes in $\mathbb R^d$ (see Figure \ref{fig:learnable_targets}). Each level-$n$ box $B_\alpha^{(n)}$ contains several non-intersecting level-$(n+1)$ sub-boxes $B_\beta^{(n+1)}$ of equal sizes (e.g. the big box shown on the right of Figure~\ref{fig:two_param2} contains 6 sub-boxes $B_\beta^{(n+1)}$). 

The sub-boxes $B_\beta^{(n+1)}$ of the box $B_\alpha^{(n)}$ do not completely fill this box (we will need the gaps between them to support non-exceptional GF trajectories and to ensure their non-trapping in local minima; see again Figure~\ref{fig:two_param2}). We will ensure, however, that these gaps are small enough so that $\mu(F_0)>1-\epsilon$. In particular, we choose the root box $B=B^{(1)}$ as $B=[-c,c]^d$, where $c$ is large enough so that $\mu(B)>1-\epsilon/2.$ 

\noindent
\textbf{Box splitting.} The next-level boxes $B_\beta^{(n+1)}$ are obtained from the parent box $B_\alpha^{(n)}$ by what we call \emph{splitting} (also referred to as \emph{carving} in the main text). Choose some sequences of \emph{splitting coordinates} $x_{k_n}$ with $k_n\in\{1,\ldots,d\}$ and integer \emph{splitting numbers} $s_n, n=1,2,\ldots$ We require that $k_{n+1}\ne k_n$ for all $n$. 

The number of child boxes $B_\beta^{(n+1)}$ contained in the parent box $B_\alpha^{(n)}$ is $2s_n$ (e.g., $s_n=3$ in Figure~\ref{fig:two_param2}). The splitting of the parent box $B_\alpha^{(n)}$ into $B_\beta^{(n+1)}$ only involves the coordinates $x_{k_n}$ and $x_{k_{n+1}}$. Specifically, if the parent box has $x_{k_n}\in[a,b],$ then the child boxes $B_\beta^{(n+1)}$ with $\beta=1,\ldots, 2s_n$ have $x_{k_n}\in [a+(\beta-1)h_n+\epsilon_n, a+\beta h_n-\epsilon_n]$, where $h_n=(b-a)/(2s_n)$ and $\epsilon_n>0$ is a sufficiently small number.  Regarding the other coordinate $x_{k_{n+1}},$ if the parent box has $x_{k_{n+1}}\in[c,d]$, then the child boxes will have $x_{k_{n+1}}\in[c+2h_n,d]$ or $x_{k_{n+1}}\in[c,d-2h_n]$, depending on whether the aligned piece of the level curve lies near the side with $x_{k_{n+1}}=c$ or with $x_{k_{n+1}}=d$ (see details on level curves below). 

The splitting along $x_{k_n}$ will ensure that the GF trajectory gets closer to the target in the $(x_{k_n}, x_{k_{n+1}})$-plane, while the $x_{k_{n+1}}$-contraction of the boxes will be used to avoid getting trapped at a local minimum.  We will return later to the conditions on the splitting numbers and coordinates necessary to ensure that $F_0\subset F_\Phi$ and $\mu(F_0)>1-\epsilon$.

\noindent
\textbf{The two parameters and $u$-monotonicity.} Let $u$ and $v$ be the two parameters of the model, so that $\mathbf w=(u,v)$. These two parameters will play very different roles. We will ensure that for all non-exceptional targets $\mathbf f\in F_0$, $u(t)$ is a monotone increasing function of $t$ on the whole GF trajectory (\emph{$u$-monotonicity}). To this end, we ensure that the whole trajectory $\mathbf w(t)$ belongs to a domain in $\mathbb R^2$ in which $\partial_u L_{\mathbf f}<0,$
so that $\tfrac{du}{dt}>0$ by definition of GF. 
In particular, this allows to parameterize the trajectories $\mathbf w(t)$ and $\Phi(\mathbf w(t))$ by $u\ge 0$.

\textbf{Level curves.} The map $\Phi$ can be described in terms of the curves $\Phi(u,v)$ where $u$ is fixed and $v$ varied. We refer to the respective straight lines $l_u=\{(u,v):v\in\mathbb R\}\subset \mathbb R^2$ in the parameter space as \emph{level lines} and the curves $\Phi(l_u)\subset\mathcal H$ in the target space as \emph{level curves}. A GF trajectory $\mathbf w(t)$ can be specified by a single point on each level line $l_u$. 

\noindent
\textbf{Level-set-based description of $\Phi$.} It will be convenient to simplify the description of the map $\Phi$  by only describing the level curves $\Phi(l_u)$ as subsets of $\mathcal H$, without specifying parameterizations $v\mapsto \Phi(u,v)$. This simplified description can be justified by making the variable $u$ \emph{slow} and keeping the variable $v$  \emph{fast}, in the following sense. Suppose that we already have some map $\widetilde\Phi:\mathbb R^2\to\mathcal H$, and define a new map $ \Phi$ by stretching the variable $u$ by a large factor $\lambda$: $\Phi(u,v)=\widetilde \Phi(u/\lambda,v)$. This rescales the $u$-derivative: $\partial_u \Phi=\lambda^{-1}\partial_u \widetilde\Phi$. Then, at $\lambda\gg 1$ we have $|\partial_u\Phi|\ll |\partial_v\Phi|$ and accordingly $|\partial_u L_{\mathbf f}|\ll |\partial_v L_{\mathbf f}|$ unless $|\partial_v \Phi|\approx 0$. This means that GF associated with $\Phi$ proceeds in the $v$-direction much faster than in the $u$-direction (i.e., transition along a level curve occurs much faster than from one level curve to another) unless near a $v$-stationary point. This implies that each point $\mathbf w(t)$ of a GF trajectory  can be approximately found by locally minimizing the distance to the target on the level curve $\Phi(l_{u(t)})$:
\begin{equation}\label{eq:wmin0}
    \mathbf w(t) \approx \mathbf w^*(u(t));\quad \mathbf w^*(u)\stackrel{\text{def}}{=}\argmin_{\mathbf w\in l_{u}}\|\mathbf f-\Phi(\mathbf w)\|.
\end{equation}
In general, the minimizer here should be local, but in our construction of $\Phi$ it will also be global. This approximation can be made arbitrarily accurate by sufficiently stretching the parameter $u$.

Since $\mathbf w^*(u)$ only depends on $u$, the trajectory $\mathbf w(t)$ is approximately independent of how level curves are parameterized by $v$. Accordingly, in the sequel we can ignore the details of this parameterization and just describe the level curves as subsets of $\mathcal H$ rather than functions of $v$.

\noindent
\textbf{The aligned hierarchical structure of $\Phi$ (Fig. \ref{fig:two_param1}).} Our construction of $\Phi$ is described in terms of a hierarchical structure of level sets aligned with the hierarchy of boxes $B^{(n)}_\alpha.$ First, let $0=u_0<u_1<u_2<\ldots$ be a sequence of particular values of $u$. We construct the map $\Phi$ separately for each strip $\{(u,v): u_n\le u\le u_{n+1}\}$. We refer to the respective restriction $\Phi^{(n)}\equiv \Phi|_{u_n\le u\le u_{n+1}}$ as \emph{the $n$'th stage} of $\Phi$. The monotonicity of $u(t)$ for targets $\mathbf f\in F_0$ will ensure that each respective trajectory $\mathbf w(t)$ sequentially passes through all the stages in their natural order. We denote by $l^{(n)}\equiv l_{u_n}$ the level lines serving as the boundaries for the domains of the stages $\Phi^{(n)}$.  

Each stage $\Phi^{(n)}$ is associated with the splitting of the level-$n$ boxes $B_\alpha^{(n)}$ and can be viewed as defining a deformation of the level curve $\Phi(l^{(n)})$ to the level curve $\Phi(l^{(n+1)})$ lying closer to the targets in the new boxes $B^{(n+1)}_\beta$. 

\noindent
\textbf{Alignment between target boxes and level curves (Fig. \ref{fig:two_param2}).} Each box $B^{(n)}_\alpha$ is accompanied by a respective \emph{aligned piece} $\Phi(l^{(n)}_{\alpha})$ of the level curve $\Phi(l^{(n)})$, where $l^{(n)}_{\alpha}$ is a segment of the level line $l^{(n)}$, and  the segments corresponding to different $\alpha$ are disjoint. The aligned piece $\Phi(l^{(n)}_{\alpha})$ is a straight line segment. It lies outside the box $B^{(n)}_\alpha$, but close to one of its 1D edges oriented along the splitting coordinate $x_{k_n}$. As $u$ increases from $u_n$ to $u_{n+1}$, the level curve segment $\Phi(l^{(n)}_{\alpha})$ is deformed in the new splitting direction $x_{k_{n+1}}$ so that $\Phi(l^{(n+1)})$ now contains segments $\Phi(l^{(n+1)}_{\beta})$ aligned with some $x_{k_{n+1}}$-oriented edges of the sub-boxes $B^{(n+1)}_\beta$, and the new stage $n+1$ can commence. 

We will ensure that each non-exceptional trajectory $\Phi(\mathbf w(t))$ goes through one of  the aligned pieces $\Phi(l^{(n)}_{\alpha})$ at each stage $n$. 

\textbf{Reformulations of $u$-monotonicity.} We need to ensure the $u$-monotonicity of non-exceptional trajectories, i.e. the condition $\partial_u L_{\mathbf f}(\mathbf w(t))<0$ holding on the whole trajectory $\mathbf w(t)$. In terms of the map $\Phi,$ this condition reads \begin{equation}
\langle\partial_u\Phi(\mathbf w(t)),\mathbf f-\Phi(\mathbf w(t))\rangle>0.
\end{equation} At a local minimizer $\mathbf w^*(u)$ given by Eq. \eqref{eq:wmin0} we have $\langle \partial_v\Phi(\mathbf w^*(u)), \mathbf f-\Phi(\mathbf w^*(u))\rangle=0$. Then, if a GF trajectory is approximated by the minimizer trajectory $\mathbf w^*=\mathbf w^*(u)$ (by stretching the parameter $u$),  the condition of $u$-monotonicity becomes
\begin{equation}\label{eq:condpperp1}
    \tfrac{d}{du} \|\mathbf f-\Phi(\mathbf w^*(u)\|^2<0.
\end{equation}
One can also equivalently write this last condition in the form
\begin{equation}\label{eq:condpperp2}
    \langle P_{\partial_v\Phi}^\perp\partial_u\Phi(\mathbf w^*(u)), \mathbf f-\Phi(\mathbf w^*(u))\rangle>0,
\end{equation}
where $P_{\partial_v\Phi}^\perp \partial_u\Phi(\mathbf w)$ denotes the projection of $\partial_u\Phi(\mathbf w)$ to the direction  orthogonal to $\partial_v\Phi(\mathbf w)$ in the plane $(\partial_u\Phi(\mathbf w), \partial_v\Phi(\mathbf w))$.
Condition \eqref{eq:condpperp2} means geometrically that a level curve $\Phi(l_u)$ locally, near the point $\Phi(\mathbf w^*(u))$, gets closer to the target  $\mathbf f$ as $u$ increases.

\textbf{Transition from $\Phi(l^{(n)})$ to $\Phi(l^{(n+1)})$ (Fig.~\ref{fig:two_param3}).} During the $n$'th stage, all the components of the map $\Phi^{(n)}:\mathbb R^2\to\mathbb R^d$ are constant in the strip $\{(u,v):u_n+\epsilon<u<u_{n+1}-\epsilon\}$ except for the components $k_n$ and $k_{n+1}$ associated with splitting directions. Here, we consider a substrip $u_n+\epsilon<u<u_{n+1}-\epsilon$ of the full stage-$n$ strip $u_n\le u\le u_{n+1}$ because we need to ensure that $\Phi$ is $C^\infty$. In the remaining narrow substrips $ u_n\le u\le u_{n}+\epsilon$ and $u_{n+1}-\epsilon\le u\le u_{n+1}$ the map $\Phi^{(n)}$ is smoothly connected to the maps $\Phi^{(n-1)}$ and $\Phi^{(n+1)},$ respectively. It is easy to see that such a smooth connection can be ensured with arbitrarily small $\epsilon$ by additionally suitably varying the components $k_{n-1}$ and $k_{n+1}$ in the respective substrips; we omit these details. 

We construct the map $\Phi$ so that each aligned piece $\Phi(l^{(n)}_\alpha)$ is a segment of the straight line oriented in the splitting direction $x_{k_n}$. In each box $B^{(n)}_\alpha$, the transformation of the level curve to the $2s_n$ next-level pieces is performed symmetrically (Fig.~\ref{fig:two_param2}), so we need to only describe the transformation in one of these $2s_n$ sub-boxes (Fig.~\ref{fig:two_param3}). 

Suppose for simplicity that the sub-box $B^{(n)}_\alpha$ in question has the form $[0, a]\times[0,b]$ w.r.t. the coordinates $(x_{k_n}, x_{k_{n+1}})$ and the aligned curve $\Phi(l^{(n)})$ lies to the left as in Fig.~\ref{fig:two_param3} (a general case is treated similarly using a translation and possibly a reflection). The intermediate level curves $\Phi(l_u), u_n\le u\le u_{n+1},$ can then be generally described in the $(x_{k_n}, x_{k_{n+1}})$ plane as formed in two sub-stages separated by some $u^*\in (u_n, u_{n+1})$.

\begin{enumerate}
\item \textbf{``Gathering'' sub-stage} $u\in [u_n, u^*]$: for some $\epsilon>0$ the level curve $\Phi(l_{u})$ can be given on the segment $x_{k_n}\in[\epsilon, a-\epsilon]$ by $x_{k_{n+1}}=\alpha(u)x_{k_n}-\widetilde\epsilon(u)$ with some small monotone decreasing $\widetilde\epsilon(u)>0$ and some  monotone increasing $\alpha(u)$ such that $\alpha(u_n)=0$ and $\alpha(u^*)=1$. In particular, at $u=u^*$ the level curve is given by $x_{k_{n+1}}=x_{k_n}-\widetilde\epsilon(u^*)$. In the remaining small segments $x_{k_n}\in[0,\epsilon]\cup[a-\epsilon,a]$ neighboring with the other sub-boxes the level curve is extended in some way (by suitable arcs) to ensure its smoothness and, moreover, concavity on the interval $(a-\epsilon,a]$ as a function $x_{k_{n+1}}=x_{k_{n+1}}(x_{k_{n}})$.
\item \textbf{``Spreading'' sub-stage} $u\in [u^*, u_{n+1}]$: the level curve $\Phi(l_{u})$ is evolved by extending its tip at $x_{k_n}=a$ all the way to $x_{k_{n+1}}>b.$ Specifically, $\Phi(l_{u})$ contains a straight line segment $I(u)$ parallel to the axis $x_{k_{n+1}}$ and eventually, at $u=u_{n+1}$, including the aligned piece $\Phi(l_{\beta}^{(n+1)})$. The $x_{k_n}$ position of the segment $I(u)$ is monotone decreasing in $u$, but remains close to $a$. The $x_{k_{n+1}}$ position of the left endpoint of $I(u)$ remains close to $a$, while for the right endpoint $x_{k_{n+1}}$ increases from around $a$ to $b$ as $u$ increases from $u^*$ to $u_{n+1}$. The right endpoint remains smoothly connected by a suitable concave arc to the analogous point in the neighboring box. See Fig.~\ref{fig:two_param3} for an illustration.    
\end{enumerate}   

The idea of this whole construction is to ensure that, as $u$ is increased, the points on the level curves $\Phi(l_u)$ closest to the target $\mathbf f$ go around the corner of the box $B^{(n+1)}_\beta$ as shown in Fig.~\ref{fig:two_param3}. The purpose of the ``gathering'' sub-stage is to force the trajectory $\Phi(\mathbf w(u))$ by $u=u^*$ to get to the tip of the level curve $\Phi^{(n)}(l_{u^*})$ at $x_{k_{n}}\approx a$. Then, in the ``spreading'' sub-stage the trajectory remains close to the moving tip until its coordinate $x_{k_{n+1}}$ reaches the respective coordinate $f_{k_{n+1}}$ of the target $\mathbf f,$ after which the trajectory slips off the tip to the straight line segment $I(u)$ (the concavity of the arcs mentioned above ensures that the trajectory is not trapped on the arcs). The trajectory then maintains the target coordinate $(\Phi(\mathbf w(u)))_{k_{n+1}}=f_{k_{n+1}}$ until reaching the aligned piece $\Phi(l^{(n+1)}_\beta)$.

Let us discuss now how the above picture may break down for some targets $\mathbf f$. First, it breaks down if the pair of target components $(f_{k_n}, f_{k_{n+1}})$ belongs to the domain swept by $\Phi^{(n)}$ (i.e., $(f_{k_n}, f_{k_{n+1}})\in \Phi^{(n)}(l_{\widetilde u})$ for some $\widetilde u\in(u_n,u_{n+1})$). In this case the $u$-monotonicity conditions \eqref{eq:condpperp1}, \eqref{eq:condpperp2} are violated for $u\ge \widetilde u$ and the GF trajectory gets trapped at a local minimum.  

Next, for some $\mathbf f$ the $u$-monotonicity holds, but the trajectory $\Phi(w)$ fails to reach the tip region $x_{k_n}\approx a$. This occurs for the targets $\mathbf f$ such that $f_{k_n}+f_{k_{n+1}}\le 2a$ -- for such targets the trajectory gets stuck near the orthogonal projection of $(f_{k_n}, f_{k_{n+1}})$ to the line $x_{k_{n+1}}=x_{k_{n}}$.

Finally, if $f_{k_n}=0,$ then the $k_n$-component of the trajectory is stuck at $0$ too. Moreover, if $f_{k_n}$ is nonzero but close to 0, then the trajectory $\Phi(\mathbf w(u)))$ remains close to the  plane $x_{k_n}=0$ and so again fails to reach the tip region.  (This happens because near this plane $\partial\Phi(\mathbf w(u_n))\approx 0$, invalidating our argument that we can ensure $|\partial_u L_{\mathbf f}|\ll |\partial_v L_{\mathbf f}|$ by stretching the variable $u$ -- the necessary stretching blows up as $f_{k_n}\to 0$.)

If the target $\mathbf f$ lies outside these three regions, then  the map $\Phi^{(n)}$ succeeds in guiding the trajectory $\Phi(\mathbf w(t))$ from a point on the aligned piece $\Phi(l_{\alpha}^{(n)})$ near the orthogonal projection of $\mathbf f$ of that piece to a point on the aligned piece $\Phi(l_{\beta}^{(n+1)})$ near the orthogonal projection of $\mathbf f$ to this piece. In particular, it is sufficient to require that $(f_{k_n}, f_{k_{n+1}})$ belongs to the rectangle $ R=[\epsilon, a-\epsilon]\times[2a,b]$ with some $\epsilon>0$: we can then suitably stretch $u$ and adjust the map $\Phi^{(n)}$ so that for all targets with $(f_{k_n}, f_{k_{n+1}})\in R$ the above transition holds, and moreover  with desired accuracy uniform in $R$.

\paragraph{The initial map $\Phi^{(0)}$.} The construction of the initial map $\Phi^{(0)}$ is slightly different (and simpler) than the above inductive construction for general stage $n$. The GF starts from the particular point $\mathbf w=0$ of the left level line $l^{(0)}=\{(u,v): u=0\}$ of stage $0$. Its right level line $l^{(1)}=\{(u,v): u=u_1\}$ has a single aligned piece $\Phi(l^{(1)}_1)$ aligned with the initial box $B^{(1)}$ at some 1D edge along the direction $x_{k_1}$. We only need to ensure that for all targets $\mathbf f$ in the initial box $B^{(1)}$, the GF trajectory approaches by $u=u_1$ a point on $\Phi(l^{(1)}_1)$ close to the orthogonal projection of $\mathbf f$ to $\Phi(l^{(1)}_1)$. 

Recall that $B^{(1)}=[-c,c]^d$. Suppose, for example, that the edge in question is $[c,-c]\times(c,c,\ldots,c)$ and the aligned piece of level line is $[c,-c]\times(a,a,\ldots,a)$ with some $a>c$. Then we can define $\Phi^{(0)}$ as the linear map
\begin{equation}
\Phi_i^{(0)}(u,v)=\begin{cases} v,& i=1\\  a+1-\tfrac{u}{u_1},& i>1.
\end{cases}
\end{equation}
It is easy to see that for all $\mathbf f\in B^{(1)}$ the respective GF trajectory $(u(t), v(t))$ satisfies $\tfrac{du}{dt}>0$ and, by choosing $u_1$ large enough, at $t_1$ such that $u(t_1)=u_1$ we will have $|v(t)-f_1|<\epsilon$ with arbitrarily small $\epsilon$.

\paragraph{Ensuring convergence $\inf_t L_{\mathbf f}(\mathbf w(t))=0$ for all $\mathbf f\in F_0$.} The presented construction of the maps $\Phi^{(n)}$ and the boxes $B^{(n+1)}_\beta$ ensures that for all targets $\mathbf f\in B^{(n+1)}_\beta$ the following approximately occurs with the components of the respective discrepancy vector $\mathbf f-\Phi(\mathbf w(t))$ as a result of the $n$'th stage of GF:
\begin{enumerate}
\item The component $k_{n+1}$ approximately vanishes.
\item The component $k_{n}$, which is approximately zero at the beginning of the stage, becomes nonzero, but limited by the size of $B^{(n+1)}_\beta$ in the $k_n$'th coordinate direction.
\item The other components remain approximately the same. 
\end{enumerate}
(``Approximately'' here means minor corrections due to the imperfect approximation by level curve minimizers \eqref{eq:wmin0}, due to gaps between the boxes and the aligned level curves, and due to smoothing of the overall map $\Phi$ at the boundaries of the restrictions $\Phi^{(n)}$; all these corrections can be made arbitrarily small). 

Clearly, it follows that if the sequence $k_n$ takes each of the values $1,\ldots,d$ infinitely many times, then, since the size of $B^{(n+1)}_\beta$ in any direction vanishes in the limit $n\to\infty$, the vectors $\mathbf f-\Phi(\mathbf w(t))$ converge to $\mathbf 0$ for all targets from $F_0=\cap_{n=1}^\infty \cup_\alpha B_\alpha^{(n)}$.    

\paragraph{Ensuring  $\mu(F_0)\ge 1-\epsilon$.} The set $F_0$ is obtained by removing a countable number of rectangular parts from the root box $B^{(1)}$ that was chosen to have measure $\mu(B^{(1)})>1-\epsilon/2$. Accordingly, we only need to show that the removed part can be made arbitrarily small with respect to the measure $\mu$. The removed parts are of two kinds (see Fig. \ref{fig:two_param2}):
\begin{enumerate}
\item Those associated with the gaps between sub-boxes $B^{(n+1)}_\beta$ in the direction $k_n$.
\item Those associated with the gaps between the aligned pieces $\Phi(l)$ and sub-boxes $B^{(n+1)}_\beta$ in the direction $k_{n+1}$.
\end{enumerate}  
The width of the parts of the first kind can be made arbitrarily small. The width of the parts of the second kind was shown to be approximately equal to $2a$, where $a$ is the width of the sub-box $B^{(n+1)}_\beta$ in the direction $k_n$ (see Fig. \ref{fig:two_param3}). However, $a$ can also be made arbitrarily small by increasing the splitting number $s_n$. 

Employing the $\sigma$-additivity of $\mu$, we conclude that the only obstacle to making the measure of the removed parts arbitrarily small is that if some of the splitting hyperplanes of co-dimension 1 used in the construction have positive measure (and so we cannot make the measure of the removed parts arbitrarily small by decreasing their width). However, the number of values $c$ such that $\mu(\{x_k=c\})>0$ is countable, so we can avoid such hyperplanes by slightly shifting the root box $B^{(1)}$.

\section{Proof of Theorem \ref{th:main_pfaff}}\label{sec:proof_th_pfaff}

The crucial property of Pfaffian functions, due to Khovanskii, is that their level sets  can only have a bounded number of connected components. This result relies on some mild assumptions on the domain $U$; we will assume for simplicity that $U=\mathbb R^W$ (see Remark 2.12 in \citet{gabrielov2004complexity}). We state the result in a form suitable for our purposes (see, e.g., Corollary 3.3 in \citet{gabrielov2004complexity}). 

\begin{theor}\label{th:connect_comp} Let $g_1,\ldots,g_k$ be Pfaffian functions on $\mathbb R^W$. Then the number of connected components of the set
\begin{equation}\label{eq:level_set}
\{\mathbf w\in\mathbb R^W: g_1(\mathbf w)=\ldots=g_k(\mathbf w)=0\}
\end{equation}
is bounded by a finite value only depending on $k$ and the complexities of the functions $g_i$.
\end{theor}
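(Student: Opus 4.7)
The plan is to prove this classical Khovanskii bound by induction on the length $l$ of the Pfaffian chain, combining a reduction to counting isolated zeros with a Pfaffian analog of Rolle's theorem. The result is inherently quantitative, so the goal is an inductive recursion of the shape ``bound at order $l$ in terms of bound at order $l-1$,'' with coefficients depending only on $k$, $l$, $\alpha$, $\beta$, and $W$.

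First I would reduce to the case in which the zero set is a finite collection of isolated points, so that the number of connected components simply equals the number of points. To do this I would perturb the system generically: replacing each $g_i$ by $g_i - c_i$ for generic small $c_i$ and applying Sard's theorem, I may assume that the gradients $\nabla g_1, \ldots, \nabla g_k$ are linearly independent on the common zero set, making it a smooth $(W-k)$-dimensional submanifold. I would then invoke Morse theory: the number of connected components is bounded by the number of critical points of a generic linear functional restricted to this manifold. The critical-point equations form a new system of $W$ Pfaffian equations in the same chain (adding Lagrange-multiplier-type equations or, equivalently, the vanishing of appropriate minors of an augmented Jacobian), and that system has a zero-dimensional solution set. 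This reduces the theorem to the codimension-$W$ case $k=W$ with a (possibly larger) degree $\beta$ but the same chain length $l$.

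The base case $l=0$ reduces to polynomial $g_i$, where the bound on the number of isolated real zeros (and more generally on the sum of Betti numbers) follows from the classical Oleinik-Petrovsky-Milnor-Thom estimates in terms of degree and ambient dimension. The inductive step uses the Khovanskii-Rolle lemma: if the last chain function $g_l$ is monotone along a smooth curve $\gamma$, then between any two consecutive zeros of a Pfaffian function on $\gamma$ lies a zero of an auxiliary function obtained by differentiating along $\gamma$ and using the defining identity $\partial g_l/\partial w_j = P_{l,j}(\mathbf{w}, g_1, \ldots, g_l)$ to express this derivative inside the chain. The crucial combinatorial step is to count isolated zeros of the system $g_1 = \cdots = g_W = 0$ by stratifying $\mathbb{R}^W$ according to the signs of $\partial g_l/\partial w_j$ (themselves Pfaffian of no greater complexity) and, on each stratum where some coordinate curve is monotone in $g_l$, applying Rolle to bound zeros of the order-$l$ system by zeros of an auxiliary system in which one equation has been replaced by a function polynomial in $g_1,\ldots,g_{l-1}$ only. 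Iterating this replacement, one eventually reduces the full system to one involving the shorter chain $g_1,\ldots,g_{l-1}$, at the cost of multiplicative factors polynomial in $\alpha$, $\beta$, $W$.

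The main obstacle will be the bookkeeping of complexity through the reductions: each Morse step and each Rolle step inflates the polynomial degree $\beta$ and the number of equations, and one must verify that all auxiliary objects (projected gradients, sign strata, critical loci) remain Pfaffian in the same chain with controlled complexity so that the induction closes. A secondary technical point is handling the generic-perturbation step rigorously: one must show that the bound for the perturbed system is an upper bound for the original by a standard semicontinuity/limit argument, ensuring that connected components of the unperturbed variety do not ``merge away'' more than one at a time as the perturbation vanishes.
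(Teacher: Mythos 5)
The paper does not actually prove this statement: it is Khovanskii's classical theorem on Pfaffian varieties, quoted as a known result with a pointer to Corollary 3.3 of Gabrielov and Vorobjov (2004), so there is no in-paper argument to compare yours against. Your sketch does follow the general architecture of the classical proof (reduction to isolated critical points, then induction on the chain length via Khovanskii--Rolle, with Oleinik--Petrovsky--Milnor--Thom as the base case), but two of your reduction steps have genuine gaps. First, perturbing to $g_i - c_i$ for generic constants $c_i$ does not give an upper bound on the number of connected components of the original set: a component can disappear entirely under such a perturbation (already $g(w)=w^2$ in one variable shows this, since $\{w^2=-c\}$ is empty for $c>0$), so the closing ``semicontinuity'' appeal fails. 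The standard fix is Milnor's sum-of-squares deformation $\{\sum_i g_i^2=\epsilon\}$ with $\epsilon>0$ small and generic (Sard applied to the single function $\sum_i g_i^2$), which genuinely surrounds every component, combined with intersecting with a large ball; relatedly, a generic linear functional need not have any critical point on a non-compact component (a line in the plane), so over all of $\mathbb R^W$ one must use the distance function to a generic point rather than a linear functional.

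Second, the Rolle step as you describe it does not reduce the chain length. The derivative of a Pfaffian function of order $l$ along a curve is, via the chain identities, again a polynomial in $(\mathbf w, g_1,\ldots,g_l)$ --- still of order $l$, not $l-1$ --- so iterating your replacement never reaches the shorter chain. The actual induction in Khovanskii's argument passes to $\mathbb R^{W+1}$ with an extra coordinate $y$ standing for $g_l$, regards the graph of $g_l$ as a separating leaf of the polynomial distribution $dy=\sum_j P_{lj}(\mathbf w, g_1,\ldots,g_{l-1},y)\,dw_j$, and bounds intersections of that leaf with a curve by the Rolle--Khovanskii lemma for separating solutions; it is this lift, not differentiation along the curve, that eliminates $g_l$ and closes the induction. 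Since the paper only needs the statement as a black box, the appropriate move is to cite it; if you do want a self-contained proof, these two points are where the real work lies.
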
  
An important special case occurs if $k=W$ and the solutions of the system \eqref{eq:level_set} are non-degenerate in the sense that the respective Jacobians $\tfrac{\partial g_i}{\partial w_j}(\mathbf w)$ are non-degenerate. In this case the level set \eqref{eq:level_set} consists of isolated points, and their number is bounded by a number only depending on the complexities of the functions $g_i$. 

The proof of Theorem \ref{th:main_pfaff} is a reduction to Theorem  \ref{th:connect_comp}.

\begin{proof}[Proof of Theorem \ref{th:main_pfaff}] It is sufficient to consider the case $W=d-1$ (by trivially extending $\Phi$ to more arguments). It is also sufficient to prove that $\overline{\Phi(\mathbb R^W)}$ has Lebesgue measure 0 in the cube $[0,1]^d$: by rescaling and shifting, it then has Lebesgue measure 0 in any other cube and then, by $\sigma$-additivity, in the whole space~$\mathbb R^d$. 

Let $N$ be a large integer. Consider the $d$-dimensional grid of cubes $\Delta_{\mathbf n}$ of size $\tfrac{1}{N}$ given by  
\begin{align}
\Delta_{\mathbf n}=\{\mathbf y\in\mathbb R^d: y_i^*+\tfrac{n_i}{N}\le y_i\le y_i^*+\tfrac{n_i+1}{N}\},\quad \mathbf n=(n_1,\ldots,n_d)\in\mathbb Z^d,
\end{align} 
where $\mathbf y^*=(y^*_1,\ldots,y^*_W)$ is some reference point. 

For each $i=1,\ldots,d$ consider the map $\widehat\Phi_i:\mathbb R^W\to\mathbb R^{d-1}\cong \mathbb R^W$ obtained from $\Phi$ by removing the $i$'th component, i.e. $\widehat\Phi_i=(\Phi_1,\ldots, \Phi_{i-1}, \Phi_{i+1},\ldots, \Phi_d)$. We will choose $\mathbf y^*$ so that for each $i$ and $\widehat{\mathbf n}\in \mathbb Z^W$ the point 
\begin{align}\label{eq:yin}
\mathbf y_{i,\widehat{\mathbf n}}=(y^*_1,\ldots,y^*_{i-1},y^*_{i+1},\ldots,y^*_{d})+\tfrac{\widehat{\mathbf n}}{N}
\end{align}
is a non-critical value of $\widehat\Phi_i$, i.e. for any $\mathbf w$ such that $\widehat\Phi_i(\mathbf w)=\mathbf y_{i,\widehat{\mathbf n}}$ the Jacobian $\tfrac{\partial \widehat{\Phi}_i}{\partial w_j}(\mathbf w)$ is non-degenerate. To this end, recall that by Sard's theorem the set $V_i$ of critical values of $\widehat\Phi_i$ has Lebesgue measure 0. Then the set $V_i'=\cup_{\widehat{\mathbf n}\in \mathbb Z^W}(V_i-\tfrac{\widehat{\mathbf n}}{N})$ also has Lebesgue measure 0 in $\mathbb R^{d-1}$. It follows that $V_i''=\{\mathbf y\in\mathbb R^d: (y_1,\ldots,y_{i-1},y_{i+1},\ldots,y_d)\in V_i'\}$ has Lebesgue measure 0 in $\mathbb R^d$. Then $V=\cup_{i=1}^d V_i''$ also has Lebesgue measure 0 in $\mathbb R^d$. The complement $\mathbb R^d\setminus V$ is precisely the set of all $\mathbf y^*$ such that the values \eqref{eq:yin} are non-critical for all $i$ and $\widehat {\mathbf n}$. Since $\mathbb R^d\setminus V$ has full Lebesgue measure, we can find a suitable $\mathbf y^*$; moreover, we can choose it so that $0< y_i^*< \tfrac{1}{N}$, which will be convenient.  

Consider the cubes $\Delta_{\mathbf n}$ lying in $[0,1]^d,$ i.e., with $\mathbf n\in\{0,\ldots,N-2\}^d$. There are $(N-1)^d$ such cubes. Suppose that the interior of a cube $\Delta_{\mathbf n}$ contains a point of $\overline{\Phi(\mathbb R^W)}$. Then, since ${\Phi(\mathbb R^W)}$ is connected, either ${\Phi(\mathbb R^W)}\subset \Delta_{\mathbf n}$ or ${\Phi(\mathbb R^W)}$ intersects the boundary of $\Delta_{\mathbf n}$. In the first case the Lebesgue measure of $\overline{\Phi(\mathbb R^W)}$ does not exceed $N^{-d},$ so if this case occurs for arbitrarily large $N$,  $\overline{\Phi(\mathbb R^W)}$ has Lebesgue measure 0.

We can thus assume that if the interior of $\Delta_{\mathbf n}$ contains a point of $\overline{\Phi(\mathbb R^W)}$, then the boundary of $\Delta_{\mathbf n}$ intersects ${\Phi(\mathbb R^W)}$. We will argue now that the number of such cubes $\Delta_{\mathbf n}$ in $[0,1]^d$ is $O(N^{d-1})$, i.e. is vanishing compared to the total number $(N-1)^d$ of the cubes. 

Indeed, the boundary of $\Delta_{\mathbf n}$ consists of cubic faces of dimensions $1,\ldots, d-1$ (ignoring the vertices, i.e. faces of dimension 0). Let $s$ be the lowest dimension of a face intersecting ${\Phi(\mathbb R^W)}$. Denote by $R$ such a face. Consider two cases.

\begin{enumerate}
\item $s=1$. In this case the face $R$ is a segment oriented along some coordinate $i$ and with the other coordinates forming a point $\mathbf y_{i,\widehat{\mathbf n}}$ of the form \eqref{eq:yin}. Recall that by construction the points $\mathbf y_{i,\widehat{\mathbf n}}$ are non-critical values of the map $\widehat\Phi_i$.  The pre-image $\Phi^{-1}(R)$ is a subset of $\widehat\Phi^{-1}_i(\mathbf y_{i,\widehat{\mathbf n}})$ and so it consists of isolated points. Since $\Delta_{\mathbf n}\subset [0,1]^{d},$ the multi-index $\widehat{\mathbf n}$ of the segment belongs to $\{0,\ldots,n-1\}^{d-1}$. By Theorem \ref{th:connect_comp}, the number of points in $\widehat\Phi^{-1}_i(\mathbf y_{i,\widehat{\mathbf n}})$ is bounded by a constant only depending on the complexity of the map $\Phi$. The total number of points in the pre-image $\widehat\Phi_i^{-1}(\{\mathbf y_{i,\widehat{\mathbf n}}: \widehat n\in\{0,\ldots,n-1\}^{d-1}\})$ is then $O(N^{d-1})$. It follows that the total number of points mapped by $\Phi$ to one-dimensional faces of all the cubes $\Delta_{\mathbf n}\subset [0,1]^{d}$ is $O(N^{d-1}).$ Since each of these faces is a face for at most $2^d$ cubes, we conclude that the total number of cubes $\Delta_{\mathbf n}\subset [0,1]^{d}$ whose one-dimensional faces intersect $\Phi(\mathbb R^W)$ is $O(N^{d-1}).$

\item $s>1$. In this case the intersection $R\cap \Phi(\mathbb R^d)$ lies in the interior of the face $R$. Let $I=\{i_1,\ldots,i_s\}$ be the coordinates along which the face is oriented. Similarly to the case $s=1$, consider the map $\widehat \Phi_I$ obtained from $\Phi$ by dropping all the components $i\in I$. The pre-image $ \Phi^{-1}(R)$ lies in the pre-image $\widehat \Phi_I^{-1}(\mathbf y)$ of the point $\mathbf y\in \mathbb R^{d-s}$ formed by the coordinates $i\notin I$ of the face $R$. Since $R\cap \Phi(\mathbb R^d)$ lies in the interior of the face $R$, the pre-image $\Phi^{-1}(R)$ is a subset of $\widehat \Phi_I^{-1}(\mathbf y)$ disconnected from the rest of $\widehat \Phi_I^{-1}(\mathbf y)$. In particular, if there are several $s$-dimensional faces (of several cubes) with the coordinates $i\notin I$ forming the same point $\mathbf y$ and having non-empty intersections with $\Phi(\mathbb R^d)$ that also lie in their interiors, then there are at least as many connected components in the pre-image $\widehat \Phi_I^{-1}(\mathbf y)$. By Theorem \ref{th:connect_comp}, the number of these connected components is bounded by a constant. Since possible values $\mathbf y$ belong to a $(d-s)$-dimensional grid with spacing $\tfrac{1}{N}$, the total number of such faces in the cube $[0,1]^d$ (over all possible $\mathbf y$'s) is $O(N^{d-s})$, and then the number of respective cubes is also $O(N^{d-s}).$

\end{enumerate}     

Taking the limit $N\to\infty$, we conclude that the Lebesgue measure of $\overline{\Phi(\mathbb R^W)}$ is 0 as desired.
\end{proof}

\section{Proof of Proposition \ref{prop:no_dense2}}\label{sec:proof_no_dense2}
Let $\mathbf f\in U$ and $L_{\mathbf f}(\mathbf w_0)<\epsilon$ for some $\mathbf w_0$. We will show that if $\epsilon$ is small enough, then there is a barrier of high loss at the sphere bounding the ball $B_{\mathbf w_0,r}=\{\mathbf w\in\mathbb R^W: \|\mathbf w-\mathbf w_0\|=r\}$ with a suitable radius $r$. Then, $\mathbf f$ belongs to the closure $\overline{ \Phi(B_{\mathbf w_0,r})}$, while, by compactness and continuity, $\overline{ \Phi(B_{\mathbf w_0,r})}=\Phi(B_{\mathbf w_0,r})\subset \Phi(\mathbb R^W)$. 

Assuming $\epsilon$ and $\Delta \mathbf w\equiv \mathbf w-\mathbf w_0$ are sufficiently small so that the segment $[\Phi(\mathbf w_0),\Phi(\mathbf w)]\subset \Phi^{-1}(U)$, we have $c\|\Delta \mathbf w\|\le \|J(\mathbf w_0)\Delta \mathbf w\|\le C\|\Delta \mathbf w\|$ and $\|\Phi(\mathbf w)-\Phi(\mathbf w_0)-J(\mathbf w_0)\Delta\mathbf w\|\le C\|\Delta\mathbf w\|^2$  with some $U$-dependent constant $0<c, C<\infty$. Then, with $\|\Delta \mathbf w\|=r,$
\begin{align}
    L_{\mathbf f}(\mathbf w)-L_{\mathbf f}(\mathbf w_0)
    ={}&\tfrac{1}{2}\|\mathbf f-\Phi(\mathbf w)\|^2-\tfrac{1}{2}\|\mathbf f-\Phi(\mathbf w_0)\|^2\\
    ={}&\tfrac{1}{2}\|\Phi(\mathbf w)-\Phi(\mathbf w_0)\|^2-\langle \mathbf f-\Phi(\mathbf w_0),\Phi(\mathbf w)-\Phi(\mathbf w_0)\rangle\\
    \ge{}&\tfrac{1}{2}\|J(\mathbf w_0)\Delta\mathbf w\|^2-C\|J(\mathbf w_0)\Delta\mathbf w\|\|\Delta\mathbf w\|^2
    -\sqrt{2\epsilon}(\|J(\mathbf w_0)\Delta\mathbf w\|+C\|\Delta\mathbf w\|^2)\\
    \ge {}&\tfrac{c^2}{2}r^2-C^2r^3-\sqrt{2\epsilon}C(r+r^2). 
\end{align}   
Choosing $r=\epsilon^{1/3}$, at small $\epsilon$ we get $L_{\mathbf f}(\mathbf w)-L_{\mathbf f}(\mathbf w_0)>0$ uniformly on $\partial B_{\mathbf w_0,r}$, as desired.

\section{Proof of Corollary \ref{corol:torus_lebesgue0}}\label{sec:proof_torus_lebesgue0}
First, observe that the loss is constant along directions orthogonal to the rows of the matrix $A=(a_{ij})$, and GF is orthogonal to these directions. Therefore, by performing an orthogonal transformation and discarding these directions, we can assume without loss of generality that $
\Phi_i(\mathbf w)=\sin(\sum_{j=1}^W a_{ij}w_j+b_i)
$   with some constants $b_i$ and a matrix $A$ that has full rank $W$. The Jacobian $J(\mathbf w)=\tfrac{\partial \Phi}{\partial\mathbf w}(\mathbf w)$ can be represented as $J(\mathbf w)=D(\mathbf w)A,$ where $D(\mathbf w)=\operatorname{diag}[\cos(\sum_{j=1}^W a_{ij}w_j+b_i), i=1,\ldots,d]$. 

Now let $U=(-c,c)^d$ with some $0<c<1$. Then, for all $\mathbf w\in\Phi^{-1}(U),$ the diagonal elements $\cos(\sum_{j=1}^W a_{ij}w_j+b_i)$ of the matrix $D(\mathbf w)$ are uniformly separated from 0 by a distance not less than $\sqrt{1-c^2}>0.$ Hence, in the operator sense, $J^*(\mathbf w)J(\mathbf w)=A^*D^2(\mathbf w)A\ge (1-c^2)A^*A\ge \widetilde c\mathbf 1$ for some $\widetilde c>0$, i.e. $J(\mathbf w)$ is uniformly non-degenerate on $\Phi^{-1}(U).$ Applying Proposition \ref{prop:no_dense2}, we conclude that $F_\Phi\subset \partial [-1,1]^d \cup \Phi(\mathbb R^W)$, which has Lebesgue measure 0 in $\mathbb R^d.$

\end{document}